\theoremstyle{plain}
\newtheorem{theorem}{Theorem}[section]
\theoremstyle{definition}
\theoremstyle{remark}
\newcommand{\es}{\varepsilon}
\newcommand{\hu}{\hat{u}}
\newcommand{\uhat}{\hat{u}}
\DeclareMathOperator*{\argmin}{arg\,min} % thin space, limits underneath in displays
\icmltitlerunning{TENG: Time-Evolving Natural Gradient for Solving PDEs With Deep Neural Nets Toward Machine Precision}
\begin{document}

\twocolumn[
\icmltitle{TENG: Time-Evolving Natural Gradient for Solving PDEs \\ With Deep Neural Nets Toward Machine Precision}

% It is OKAY to include author information, even for blind
% submissions: the style file will automatically remove it for you
% unless you've provided the [accepted] option to the icml2024
% package.

% List of affiliations: The first argument should be a (short)
% identifier you will use later to specify author affiliations
% Academic affiliations should list Department, University, City, Region, Country
% Industry affiliations should list Company, City, Region, Country

% You can specify symbols, otherwise they are numbered in order.
% Ideally, you should not use this facility. Affiliations will be numbered
% in order of appearance and this is the preferred way.
\icmlsetsymbol{equal}{*}

\begin{icmlauthorlist}
\icmlauthor{Zhuo Chen}{email,mitphysics,iaifi}
\icmlauthor{Jacob McCarran}{email,mitphysics,equal}
\icmlauthor{Esteban Vizcaino}{email,mitphysics,equal}
\icmlauthor{Marin Soljačić}{email,mitphysics,iaifi}
\icmlauthor{Di Luo}{email,mitphysics,iaifi,harvardphysics}
\end{icmlauthorlist}

\icmlaffiliation{email}{\texttt{\{chenzhuo,mccarran,edviz,soljacic,diluo\} @mit.edu}}
\icmlaffiliation{mitphysics}{Department of Physics, Massachusetts Institute of Technology}
\icmlaffiliation{iaifi}{NSF AI Institute for Artificial Intelligence and Fundamental Interactions}
\icmlaffiliation{harvardphysics}{Department of Physics, Harvard University}

\icmlcorrespondingauthor{Di Luo}{}

% You may provide any keywords that you
% find helpful for describing your paper; these are used to populate
% the "keywords" metadata in the PDF but will not be shown in the document
\icmlkeywords{Machine Learning, ICML, PDE, PINN, TENG, TDVP, OBTI}

\vskip 0.3in
]

% this must go after the closing bracket ] following \twocolumn[ ...

% This command actually creates the footnote in the first column
% listing the affiliations and the copyright notice.
% The command takes one argument, which is text to display at the start of the footnote.
% The \icmlEqualContribution command is standard text for equal contribution.
% Remove it (just {}) if you do not need this facility.

%\printAffiliationsAndNotice{}  % leave blank if no need to mention equal contribution
\printAffiliationsAndNotice{\icmlEqualContribution} % otherwise use the standard text.

\begin{abstract}
Partial differential equations (PDEs) are instrumental for modeling dynamical systems in science and engineering. The advent of neural networks has initiated a significant shift in tackling these complexities
though challenges in accuracy persist, especially for initial value problems. In this paper, we introduce the \textit{Time-Evolving Natural Gradient (TENG)}, generalizing time-dependent variational principles and optimization-based time integration, leveraging natural gradient optimization to obtain high accuracy in neural-network-based PDE solutions. Our comprehensive development includes algorithms like TENG-Euler and its high-order variants, such as TENG-Heun, tailored for enhanced precision and efficiency. TENG's effectiveness is further validated through its performance, surpassing current leading methods and achieving \textit{machine precision} in step-by-step optimizations across a spectrum of PDEs, including the heat equation, Allen-Cahn equation, and Burgers' equation.
\end{abstract}

\section{Introduction}

Partial differential equations (PDEs) hold profound significance in both the theoretical and practical realms of mathematics, science, and engineering. They are essential tools for describing and understanding a multitude of phenomena that exhibit variations across different dimensions and points in time. 
The study and solution of PDEs have driven advancements in numerical analysis and computational methods, as many real-world problems modeled by PDEs are too complex for analytical solutions. The long-pursued quest for an efficient and accurate numerical PDE solver continues to be a central endeavor passionately pursued by research communities. 

\begin{figure}[t]
    \centering
    \includegraphics[width=0.95\linewidth]{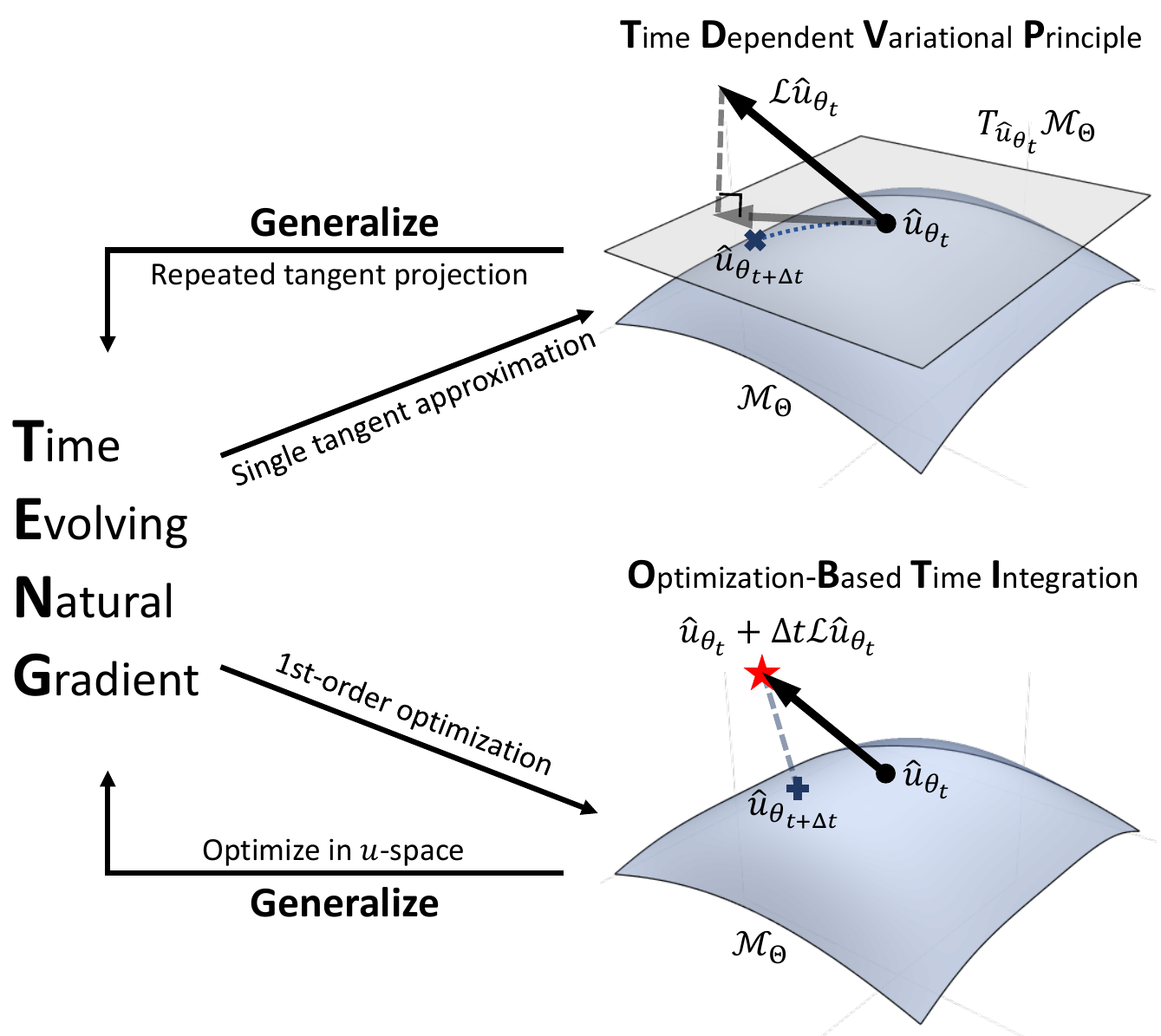}
    \caption{TENG generalizes the existing TDVP and OBTI methods. Within a single time step, TDVP projects the update direction $\mathcal{L}\uhat_{\theta_t}$ onto the tangent space of the neural network manifold $T_{\uhat_{\theta_t}}\mathcal{M}_\Theta$ at time $t$, and evolves the parameters $\theta$ according to this tangent space projection. OBTI optimizes $\theta$ to obtain an approximation to the target function $\hat{u}_{\theta_t} + \Delta t \mathcal{L} \hat{u}_{\theta_t}$ on the manifold $\mathcal{M}_\Theta$. Generalizing these two methods, TENG defines the loss function directly in the $u$-space and optimizes the loss function via repeated projections to the tangent space $T_{\uhat_{\theta_t}}\mathcal{M}_\Theta$.}
    \label{fig:illustration}
\end{figure}

In recent years, the introduction of machine learning (ML) into the study of PDEs~\cite{han2017deep,yu2018deep,long2018pde,carleo2017solving, raissi2019physics,li2020fourier,lu2019deeponet,han2018solving,sirignano2018dgm, chen2022simulating, chen_2023} has marked a transformative shift in both fields, particularly highlighted in the realms of computational mathematics and data-driven discovery. Machine learning offers new possibilities for tackling the complexities inherent in PDEs, which often pose significant challenges for traditional numerical methods due to high dimensionality, nonlinearity, or chaotic behavior. By leveraging neural networks' ability to approximate complex functions, algorithms have been developed to solve, simulate, and even discover PDEs from data, circumventing the need for explicit formulations. 

Partial differential equations with initial value problems, crucial in describing the evolution of dynamical systems, represent a fundamental class within the realm of PDEs. Despite the promising advancements made by machine learning techniques in approximating the solutions to these complex PDEs, they frequently encounter difficulties in maintaining high levels of accuracy, a challenge that becomes particularly pronounced when navigating the intricate initial conditions. This challenge largely originates from the cumulative and propagative of errors in PDE solvers over time, necessitating precise solutions at each time step for accuracy. Although various training strategies, both global-in-time training~\cite{muller2023achieving} and sequential-in-time training~\cite{chen2023implicit,berman2023randomized}, have been proposed to address this issue, it continues to stand as a critical challenge in the field. 

\textbf{Contributions.} In this paper, we introduce a highly accurate and efficient approach for tackling the above challenge by introducing \textbf{Time-Evolving Natural Gradient (TENG)}. Our key contributions are three-fold and highlighted as follows:

\begin{itemize}
    \item Propose the TENG method which generalizes two fundamental approaches in the field, time-dependent variational principle (TDVP) and optimization-based time integration (OBTI), and achieves highly accurate results by integrating natural gradient with sequential-in-time optimization.
    \item Develop efficient algorithms with sparse update for the realization of TENG, including the basic TENG-Euler and the highly accurate higher-order versions, such as TENG-Heun.
    \item Demonstrate that our approach obtains orders of magnitude better performances than state-of-the-art methods such as OBTI, TDVP with sparse updates, and PINN with energy natural gradient, and achieves machine precision accuracy during per-step optimization on a variety of PDEs, including the heat equation, Allen-Cahn equation, and Burgers' equation. 
\end{itemize}

\section{Related work}

\textbf{Machine Learning in PDEs.} Machine learning has been used to solve PDEs by using neural networks as function approximators to the solutions. In general, there are two types of strategies, global-in-time optimization and sequential-in-time optimization. Global-in-time optimization includes the physics-informed neural network (PINN)~\cite{raissi2019physics,wang2023long,wang2021learning,sirignano2018dgm,wang2021spacetime}, which optimizes the neural network representation over time and space simultaneously, or deep Ritz method~\cite{weinan2021algorithms,yu2018deep} when the variational form of the PDE exists. In contrast, sequential-in-time optimization (sometimes also called neural Galerkin method) only uses the neural network to represent the solution at a particular time step and updates the neural network representation step-by-step in time. There are different approaches to achieving such updates, including time-dependent variational principle (TDVP) ~\cite{dirac1930note,koch2007dynamical,carleo2017solving,Du_2021, berman2023randomized} and optimization-based time integration (OBTI)~\cite{chen2023implicit,kochkov2018variational,gutierrez2022real,luo2022autoregressive,luo2023gauge, Sinibaldi2023unbiasingtime}. Machine Learning has also been applied to model PDEs based on data. Such data-driven approaches include neural ODE~\cite{chen2018neural}, graph neural network methods~\cite{pfaff2020learning,sanchez2020learning}, neural Fourier operator~\cite{li2020fourier}, and DeepONet~\cite{lu2019deeponet}.

\textbf{Natural Gradient.} The concept of natural gradients, first introduced by Amari~\cite{amari1998natural} has become a cornerstone in the evolution of optimization techniques within machine learning. These methods modify the update direction in gradient-based optimization as a second-order method, typically involving using the Fisher matrix. Distinct from traditional gradient methods due to its consideration of the underlying data geometry, natural gradient descent leads to faster and more effective convergence in various scenarios. Natural gradient descent and its variants have found widespread application in areas such as neural network optimization~\cite{peters2003reinforcement,pascanu2013revisiting,zhang2019fast}, reinforcement learning~\cite{peters2003reinforcement,kakade2001natural}, quantum optimization~\cite{stokes2020quantum}, and PINN training~\cite{muller2023achieving}.

\section{Problem Formulation and Challenges} \label{sec:prelim}

\subsection{Problem formulation}
Given a spatial domain $\mathcal{X} \subseteq \mathbb{R}^d$ and temporal domain $\mathcal{T} \subset \mathbb{R}$, let $u$ be a function $\mathcal{X} \times \mathcal{T} \rightarrow \mathbb{R}$ that satisfies the following initial value problem of a PDE
\begin{equation}
\begin{aligned}
    \frac{\partial u(x, t)}{\partial t} &= \mathcal{L} u(x, t) \quad \text{for} \quad (x,t)\in\mathcal{X}\times\mathcal{T} \quad \text{and} \\
    % \mathcal{B} u(x, t) &= f(x, t) \quad \text{for} \quad x\in \partial \mathcal{X}; \\
    u(x, 0) &= u_0(x),
\end{aligned}
\end{equation}
with appropriate boundary conditions. The sequential-in-time optimization approach uses neural network to parameterize the solution of the PDE at a particular time step $t\in \mathcal{T}$ as $\uhat_{\theta_t}(x): \Theta\times\mathcal{X} \rightarrow \mathbb{R}$, where the parameters have an explicitly time dependence $\theta_t: \mathcal{T} \rightarrow \mathbb{R}^{N_p}$ (with $N_p$ the number of parameters) and evolves over time. To solve the PDE, the neural network is first optimized to match the initial condition $\uhat_{\theta_0}(x) = u_0(x)$, and then optimized in a time-step-by-time-step fashion to update the parameters.

We contrasted this with the global-in-time optimization method, such as PINN~\cite{raissi2019physics}, where the neural network is used to parameterize the solution for all time $\uhat_\theta(x, t)$ with a single set of parameters. In this context, a loss function that gives rise to the global solution of the PDE is used to optimize the parameters.

\subsection{Time Dependent Variational Principle} \label{sec:tdvp}

Time-dependent variational principle (TDVP) is an existing sequential-in-time method. It aims to derive an ODE in the parameter $\theta$-space based on the function $u$-space PDE (Fig.~\ref{fig:illustration}). The most commonly used projection method is the Dirac--Frenkel variational principle~\cite{dirac1930note}, which defines the ODE by solving the following least square problem at each time step
\begin{equation}\label{eq:tdvp}
    \partial_t{\theta} = \argmin_{\partial_t \theta \in \mathbb{R}^{N_p}} \norm{\mathcal{L} \uhat_{\theta}(\cdot) - \textstyle\sum_{j}J_{(\cdot), j}\partial_t{\theta}_j}_{L^2(\mathcal{X})}^2,
\end{equation}
where $J_{(x), j}:=\partial \hat u_\theta(x)/\partial \theta_j$ is the Jacobian.

Denoting the function space of $u$ with $\mathcal{U}$, the manifold of neural network parameterized functions $\uhat_\theta$ with $\mathcal{M}_\Theta$, and the tangent space to the manifold at $\uhat_{\theta_t}$ with $T_{\uhat_{\theta_t}}\mathcal{M}_\Theta$,  Eq.~\eqref{eq:tdvp} gives the orthogonal projection of the evolution direction $\partial_t u = \mathcal{L}u$ onto the tangent space $T_{\uhat_{\theta_t}}\mathcal{M}_\Theta$ generated by the pushfoward of $\partial_t \theta$. The resulting ODE in the $\theta$-space can then be evolved in discrete time steps using numerical integrators such as the 4th-order Runge--Kutta (RK4) method. 

\textbf{\textit{Limitations.}} The Dirac--Frenkel variational principle produces the orthogonal projection of the evolution onto the tangent space $T_{\uhat_{\theta_t}}\mathcal{M}_\Theta$ at $\uhat_{\theta_t}$ during each time step. For nonzero time step sizes $\Delta t$, however, the result becomes only an approximation to the optimal projection of the target solution onto the manifold $\mathcal{M}_\Theta$. The evolution on $\mathcal{M}_\Theta$ can also deviate from the projected direction on $T_{\uhat_{\theta_t}}\mathcal{M}_\Theta$ due to nonzero time step sizes, which gives rise to the following consequence: although Eq.~\eqref{eq:tdvp} is reparameterization invariant, its nonzero $\Delta t$ version is not (see Appendix Theorem~\ref{thm:tdvp_reparam} for detail). In addition, the least square problem in Eq.~\eqref{eq:tdvp} is often ill-conditioned and the solution could be sensitive to the number of parameters, the number of samples used, and the regularization method. Although Ref.~\cite{berman2023randomized} proposed a sparse update method, where a random subset of parameters are updated at each time step, it is still hard to verify whether such choice gives the best projection in practice. Meanwhile, the solution of Eq.~\eqref{eq:tdvp} after regularization could be different from optimal.

\subsection{Optimization-based Time Integration} \label{sec:obti}

Optimization-based time integration (OBTI) is an alternative sequential-in-time method. It directly discretizes the PDE into time steps in the original function $u$-space; in each time step, OBTI first finds the next-time-step target function $u_\mathrm{target}$ based on the current-time-step $\uhat_{\theta_{t}}$, and then optimizes the next-time-step parameters $\theta_{t+\Delta t}$ by minimizing a loss function
\begin{equation} \label{eq:obti_loss}
    \theta_{t+\Delta t} = \argmin_{\theta \in \Theta} L(\uhat_{\theta}, u_\mathrm{target}).
\end{equation}
Depending on the discrete-time integration schemes used, $u_\mathrm{target}$ can be different. The most commonly used integration scheme is the forward Euler's method, where $u_\mathrm{target} = \uhat_{\theta_t} + \Delta t \mathcal{L} \uhat_{\theta_t}$
Some typical loss functions used in OBTI methods include the $L^2$-distance, the $L^1$-distance, and the KL-divergence.

\textbf{\textit{Limitations.}} Although the optimal solution to Eq.~\eqref{eq:obti_loss} gives the best approximation of $u_\mathrm{target}$ in $\mathcal{M}_\Theta$, in practice, the optimization can be very difficult with a non-convex landscape. Common optimizers such as Adam and BFGS (L-BFGS) often require a significant number of iterations to obtain an acceptable loss value. Since this optimization has to be repeated over all time steps, the accumulation of error and cost often results in poor performance. In addition, the integration scheme used in current implementations of OBTI is often limited to the forward Euler's method, which requires small time steps and further amplifies the issue of error accumulation and cost. We note that while higher-order integration schemes have been explored in prior works, they either involve applying $\mathcal{L}$ multiple times on $\uhat_\theta$ \cite{Donatella_2023} or require differentiating through $\mathcal{L} \uhat_\theta$ with respect to $\theta$ \cite{luo2022autoregressive,luo2023gauge}, both of which requires high-order differentiation, leading to stability issues and further increase of the cost.

\section{Time-Evolving Natural Gradient (TENG)}

\subsection{Generalization from TDVP and OBTI}
We first make the following observation.

{\bf Observation:} \textit{TDVP can be viewed as solving Eq.~\eqref{eq:obti_loss} with the (squared) $L^2$-distance as the loss function using a single tangent space approximation at each time step.}

\begin{proof}
At time $t$, the neural network manifold $\mathcal{M}_\Theta$ can be approximated at the point $\uhat_{\theta_t}$ by its tangent space as
\begin{equation}
    \uhat_{\theta + \delta \theta} = \uhat_{\theta} + \sum_j \frac{\partial \uhat_{\theta}}{\partial \theta_j} \delta \theta_j + \mathcal{O}(\delta\theta^2),
\end{equation}
Let $L(\uhat_{\theta + \delta \theta}, u_\mathrm{target}) = \norm{\uhat_{\theta + \delta \theta} - u_\mathrm{target}}_{L^2(\mathcal{X})}^2$. For small $\delta t$, $u_\mathrm{target} = \uhat_{\theta} + \delta t \mathcal{L}\uhat_{\theta} + \mathcal{O}(\delta t ^2)$. Keeping everything to first order, the loss function takes its minimum when
\begin{equation}
    \delta\theta = \argmin_{\delta \theta \in \mathbb{R}^{N_p}} \norm{\delta t \mathcal{L} \uhat_{\theta}(\cdot) - \sum_{j}\frac{\partial \uhat_{\theta}(\cdot)}{\partial \theta_j}{\delta\theta}_j}_{L^2(\mathcal{X})}^2.
\end{equation}
Dividing both sides by $\delta t$ recovers the TDVP (Eq.~\eqref{eq:tdvp}).
\end{proof}
 
Inspired by such observation, we introduce the time-evolving natural gradient (TENG) method, which generalizes TDVP and OBTI methods in the following way: 

\textbf{\textit{TENG solves Eq.~\eqref{eq:obti_loss} via a repeated tangent space approximation to the manifold for each time step.}}

{\bf TENG subroutine within each time step.} The key idea of TENG is shown in Fig.~\ref{fig:illustration}. During each time step, TENG minimizes the loss function in \textit{multiple iterations} (similar to OBTI), and within each iteration, it updates the parameters based on the function $u$-space gradient (of the loss function) \textit{projected to the parameter $\theta$-space} (similar to TDVP). Here, we reserve the phrase ``time step'' for physical time steps of the PDE and ``iteration'' for optimization steps within each physical time step. 

The details of TENG iterations within a single time step are shown in Subroutine~\hyperref[alg:match]{\texttt{TENG\_stepper}}, where $\alpha_n$ is the learning rate at the $n$th iteration, and the $\texttt{least\_square}(J_{(x), j}, \Delta u(x))$ should be interpreted as solving the least square problem
\begin{equation}
    \Delta\theta = \argmin_{\Delta \theta \in \mathbb{R}^{N_p}} \norm{\Delta u (\cdot) - \textstyle\sum_{j}J_{(\cdot), j}{\Delta\theta}_j}_{L^2(\mathcal{X})}^2.
\end{equation}

\begin{algorithm}[H]
\caption{{\bf Subroutine} \texttt{TENG\_stepper}}\label{alg:match}
\begin{algorithmic}
% \REQUIRE $u_\theta$, $u_\mathrm{target}$, $\{x_i\}$, $N_{it}$
\STATE {\bfseries Input:} $\theta_\mathrm{init}$, $u_\mathrm{target}$
\STATE $n \gets 0$, $\theta \gets \theta_\mathrm{init}$
\WHILE {$n < N_\mathrm{it}$}
\STATE $\Delta u(x) \gets - \alpha_n \dfrac{\partial L(\uhat_\theta, u_\mathrm{target})}{\partial \uhat_\theta}(x)$
\STATE $J_{(x),j} \gets \dfrac{\partial \uhat_\theta(x)}{\partial \theta_j}$
\STATE $\Delta \theta \gets \texttt{least\_square}(J_{(x), j}, \Delta u(x) )$
% \STATE $\Delta \theta \gets \texttt{minimize}\Bigg( \displaystyle \int_x\bigg(\sum_j J_{x,j} \Delta \theta_j - \Delta u(x)\bigg)^2 \mathrm{d}x \Bigg)$
\STATE $\theta \gets \theta + \Delta \theta$
\STATE $n \gets n+1$
\ENDWHILE
\STATE {\bfseries Output:} $\theta$
\end{algorithmic}
\end{algorithm}

We note that when Subroutine \hyperref[alg:match]{\texttt{TENG\_stepper}} is performed under certain approximations, it can be reduced to TDVP or OBTI (see Appendix~\ref{app:theory} for detail).

{\bf TENG resolves the limitations of both TDVP and OBTI.} As mentioned in Sec.~\ref{sec:tdvp}, TDVP suffers from inaccurate tangent space approximation for nonzero $\Delta t$. TENG does not suffer from this issue because of the repeated use of tangent space projections, which eventually minimizes Eq.~\eqref{eq:obti_loss} on the manifold. This also gives the following theorem as a direct consequence, which does not hold for TDVP.
\begin{theorem} 
The optimal solution of TENG is reparameterization invariant even with nonzero $\Delta t$ .
\end{theorem}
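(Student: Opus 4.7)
The plan is to reduce the claim to an essentially tautological observation: TENG's stated target is $\theta_{t+\Delta t} = \argmin_{\theta\in\Theta} L(\uhat_\theta, u_\mathrm{target})$, and the loss $L$ is defined as a functional of $\uhat_\theta$ alone (an $L^2$, $L^1$, or KL distance in the ambient function space $\mathcal{U}$). Therefore the value of the objective at any parameter $\theta$ depends only on the point $\uhat_\theta$ in the image manifold $\mathcal{M}_\Theta$, never on the coordinate $\theta$ itself. This is the feature of TENG that TDVP lacks at nonzero $\Delta t$, and it is the source of the invariance.

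First I would fix what ``reparameterization'' means: a diffeomorphism $\phi:\Theta'\to\Theta$ defines a new family $\uhat'_{\theta'}:=\uhat_{\phi(\theta')}$, and because $\phi$ is a bijection the image manifold is unchanged, $\mathcal{M}_{\Theta'}=\mathcal{M}_\Theta$. Then
\[
\min_{\theta'\in\Theta'} L(\uhat'_{\theta'},u_\mathrm{target}) \;=\; \min_{\theta\in\Theta} L(\uhat_\theta,u_\mathrm{target}),
\]
with any minimizing function $u^{\star}\in\mathcal{M}_\Theta$ shared across the two descriptions and the optimal parameters related by $\theta^{\star}=\phi((\theta')^{\star})$. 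This is precisely reparameterization invariance of the optimum at the function level, and it holds for \emph{any} $\Delta t$ because TENG is defined to target the manifold minimizer of $L$ rather than a first-order-in-$\Delta t$ tangent-space surrogate.

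To round the argument out I would also verify that the iterative scheme in Subroutine \texttt{TENG\_stepper} itself respects this invariance, so the claimed fixed point can actually be approached in a parameterization-independent way. Under $\phi$ the Jacobian transforms as $J'=J M$ with $M=\partial\phi/\partial\theta'$ invertible; substituting into the normal equations of the least-square projection shows that $J'\Delta\theta'=J\Delta\theta$, i.e.\ the function-space update produced by one iteration is the same in both parameterizations. The only obstacle I anticipate is mild and technical: when $J$ is rank-deficient the raw increment $\Delta\theta$ is non-unique. I would dispose of this by defining $\Delta\theta$ via the pseudoinverse (minimum-norm solution) and observing that only the tangent-space projection $J\Delta\theta$ onto $T_{\uhat_\theta}\mathcal{M}_\Theta$ enters the invariance statement, and this projection is intrinsic to the manifold.
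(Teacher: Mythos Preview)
Your proposal is correct and takes essentially the same approach as the paper: the loss $L$ depends on $\theta$ only through the point $\uhat_\theta\in\mathcal{M}_\Theta$, and a reparameterization leaves $\mathcal{M}_\Theta$ unchanged, so the minimizing function is the same and the minimizing parameters differ only by the relabeling $\phi$. Your third paragraph, verifying that each \texttt{TENG\_stepper} iteration produces the same function-space update under reparameterization, goes beyond what the paper actually proves (which concerns only the optimum) but is a natural and correct supplement.
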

\begin{proof}
TENG achieves its optimum when $\hat{u} \in \mathcal{M}_\Theta$ is closest to $u_\mathrm{target}$ at each time step. Since a reparameterization does not change the manifold $\mathcal{M}_\Theta$ and the loss is defined in the function space, therefore, the optimal solution differs by merely a relabeling of parameters at the same point in $\mathcal{M}_\Theta$.
\end{proof}
The global convergence of natural gradient has been studied in certain non-linear neural network~\cite{zhang2019fast}. Although achieving global optimal is not theoretically guaranteed in general, in practice, the loss function is usually convex in the $u$-space, and $\hat{u}_{\theta}$ is often close to $u_\mathrm{target}$ because of small time step sizes. The result is likely to be close to global optimal. In practice, we observe the optimization can result in loss values close to machine precision ($\mathcal{O}(10^{-14})$).

In addition, while TDVP may require solving an ill-conditioned least square equation and an inaccurate solution directly affects the $\theta$-space ODE, solving the least square problem is only part of the optimization procedure for TENG, which turns out to have a smaller side effect. An inaccurate least square solution does not lead to an inaccurate solution to Eq.~\eqref{eq:obti_loss}, given sufficient iterations. 
The resulting loss value of Eq.~\eqref{eq:obti_loss} also provides a concrete metric for TENG on the accuracy during optimization.

As discussed in Sec.~\ref{sec:obti}, the main challenge for the current OBTI method lies in the difficulty of optimizing the time integrating loss function (Eq.~\eqref{eq:obti_loss}). While the loss function is a complicated non-convex function in the parameter $\theta$-space, it is usually convex in the $u$-space; therefore, it is advantageous to perform gradient descent in $u$-space and project the solution to $\theta$-space. Furthermore, TENG can also benefit from the reparametrization invariant property described above. While an efficient higher-order time integration method is still lacking in the current OBTI method, in this work we show how to incorporate higher-order methods into TENG.

{\bf TENG formulation over time steps.} The most simple time integration scheme is the forward Euler's method, which only keeps the lowest order Taylor expansion of the PDE. When integrated in the TENG method, we set $u_\mathrm{target}=\uhat_{\theta_t} + \Delta t \mathcal{L}\uhat_{\theta_t}$ and use Subroutine~\hyperref[alg:match]{\texttt{TENG\_stepper}} to solve for $\uhat_{\theta_{t+\Delta t}}$. The full algorithm is summarized in Algorithm~\hyperref[alg:euler]{\texttt{TENG\_Euler}}.

\begin{algorithm}
\caption{{\bf Algorithm} \texttt{TENG\_Euler}: A 1st-order integration scheme}\label{alg:euler}
\begin{algorithmic}
% \REQUIRE $u_\theta$, $u_\mathrm{target}$, $\{x_i\}$, $N_{it}$
\STATE {\bfseries Input:} $\theta_{t=0}$, $\Delta t$, $T$
\STATE $t \gets 0$
\WHILE {$t < T$}
\STATE $u_\mathrm{target}(x) \gets \uhat_{\theta_{t}}(x) + \Delta t \mathcal{L}\uhat_{\theta_{t}} (x) $
\STATE $\theta_{t+\Delta t} \gets \texttt{TENG\_stepper}(\theta_{t}, u_\mathrm{target})$
\STATE $t \gets t + \Delta t$
\ENDWHILE 
\STATE {\bfseries Output:} $\theta_{t=T}$
\end{algorithmic}
\end{algorithm}

\begin{algorithm}
\caption{{\bf Algorithm} \texttt{TENG\_Heun}: A 2nd-order integration scheme}\label{alg:heun}
\begin{algorithmic}
% \REQUIRE $u_\theta$, $u_\mathrm{target}$, $\{x_i\}$, $N_{it}$
\STATE {\bfseries Input:} $\theta_{t=0}$, $\Delta t$, $T$
\STATE $t \gets 0$
\WHILE {$t < T$}
\STATE $u_\mathrm{temp}(x) \gets \uhat_{\theta_{t}}(x) + \Delta t \mathcal{L}\uhat_{\theta_{t}} (x) $
\STATE $\theta_\mathrm{temp} \gets \texttt{TENG\_stepper}(\theta_{t}, u_\mathrm{temp})$
\STATE $u_\mathrm{target}(x) \gets \uhat_{\theta_{t}}(x) + \dfrac{\Delta t}{2} \left(\mathcal{L}\uhat_{\theta_{t}} (x) + \mathcal{L}u_{\theta_\mathrm{temp}} (x)\right)$
\STATE $\theta_{t + \Delta t} \gets \texttt{TENG\_stepper}(\theta_\mathrm{temp}, u_\mathrm{target})$
\STATE $t \gets t + \Delta t$
\ENDWHILE 
\STATE {\bfseries Output:} $\theta_{t=T}$
\end{algorithmic}
\end{algorithm}

Beyond the first-order Euler's method, Algorithm \hyperref[alg:heun]{\texttt{TENG\_Heun}} provides an example of applying second-order integration method. In this method, an intermediate target solution $u_\mathrm{temp}$ is used, and a set of intermediate parameters $\theta_\mathrm{temp}$ is trained. The intermediate parameters are used to construct $u_\mathrm{target}$ and $\theta_{t+\Delta t}$. Our method avoids terms like $\mathcal{L}^{n} \uhat_{\theta_t}$ or $\partial{\mathcal{L} \uhat_{\theta_t}}/{\partial \theta_t}$ that often appear in existing OBTI methods~\cite{Donatella_2023, luo2022autoregressive,luo2023gauge}, reducing the cost and improving numerical stability. 

\textbf{Connection to natural gradient.} We note that the algorithm outlined in Subroutine~\hyperref[alg:match]{\texttt{TENG\_stepper}} can be reformulated using the conventional Hilbert natural gradient in the form
\begin{equation}
    \Delta \theta_i = - \alpha \sum_j {G^{-1}(\theta)_{i,j}} \dfrac{\partial L(\uhat_\theta, u_\mathrm{target})}{\partial \theta_j}(x),
\end{equation}
with $G(\theta)$ the Hilbert gram matrix (see Appendix~\ref{app:theory} for detail). However, solving least square equations is more stable, with the added flexibility of choosing least square solvers. Therefore, we use the formulation in Subroutine~\hyperref[alg:match]{\texttt{TENG\_stepper}} for practical implementation.

Alternatively, Subroutine~\hyperref[alg:match]{\texttt{TENG\_stepper}} can also be viewed as a generalized Gauss--Newton method. Therefore, TENG can also be interpreted as the abbreviation of time-evolving \textit{Newton--Gauss} (also see Appendix~\ref{app:theory}).

\subsection{Complexity and Error Analysis}
The computational complexity of TENG from $t=0$ to $T$ is $\mathcal{O}(C_\mathrm{lstsq}N_\mathrm{it}T/\Delta t)$, where $C_\mathrm{lstsq} = \mathcal{O}(N_s N_p^2)$ (with $N_s$ the number of samples and $N_p$ the number of parameters) is the cost of solving the least square equation in each iteration, $N_\mathrm{it}$ is the number of iterations in each time step, and $T/\Delta t$ is the number of physical time steps. In comparison, the computational complexity of TDVP is $\mathcal{O}(C_\mathrm{lstsq}'T/\Delta t)$ and the computational complexity of OBTI is $\mathcal{O}(N_\mathrm{it}'T/\Delta t)$. Although the cost of TENG includes both $C_\mathrm{lstsq}$ and $N_\mathrm{it}$, both of the terms can be \textit{significantly smaller} than those in TDVP and OBTI due to the following reasons. 

In TDVP, the quality of the least square solution directly corresponds to the accuracy; therefore the least square equation must be solved with high accuracy and thus require high cost. Even with sparse update~\cite{berman2023randomized}, one may not be able to use too few parameters; otherwise, the update may be inaccurate. In contrast, the least square equation can be solved approximately in TENG without compromising accuracy. In this work, we design a sparse update scheme for TENG. In each iteration, we randomly sub-sample parameters and only solve the least square equation within the space of these parameters, which significantly helps reduce the cost.

OBTI, on the other hand, requires a large number of iterations in every time step to minimize the loss function, due to the difficulty of the non-convex optimization landscape. In contrast, in our TENG method, the loss values decrease to close to machine precision with only $\mathcal{O}(1)$ iterations (see Sec.~\ref{sec:results} and Appendix~\ref{app:exp} for detail). In practice, we observe that TENG is able to improve the accuracy by orders of magnitude while keeping a similar computational cost to TDVP and OBTI (also see Appendix~\ref{app:exp}).

The error of TENG is in general determined by (i) the expressivity of the neural network (ii) the optimization algorithm (iii) the time integration scheme. Based on the universal approximation theorem, with a proper choice of neural network, it is likely that the neural network is sufficiently powerful to represent the underlying solution at every time step; thus the error from (i) is small in general. Our TENG algorithm is able to achieve loss values close to machine precision at every time step; therefore the error from (ii) error is also small. Given sufficiently small errors in (i) and (ii), factor (iii) dominates the convergence property of TENG. At the same time, higher-order time-integration schemes can be integrated with TENG, in which case the error from (iii) follows the standard numerical analysis results for solving differential equations. From the above perspectives, we further contrast TENG with other algorithms. While TDVP does not have an optimization error, the projection step already introduces some errors, which can be severe for nonzero time step sizes and when the least square equation in Eq.~\eqref{eq:tdvp} is low rank~\cite{berman2023randomized}. For OBTI, the error from (ii) can be large, resulting in poor performances, in addition to the lack of efficient higher-order time-integration schemes in prior works.

TENG also permits error estimation based on the decomposition of errors above. Below, we outline the error estimation for TENG-Euler. Errors for TENG with higher-order integration methods can be estimated analogously.

Let $\es_p(t)$ be the $L^p$-error between the TENG-Euler solution and the exact solution at time $t$, $\es_p^\mathrm{EE}(t)$ between the exact solution and the solution evolved exactly according to the Euler's method, $\es_p^\mathrm{TE}(t)$ between the TENG-Euler and the solution evolved exactly according to the Euler's method, $r(\cdot, t)$ the residual function after the TENG-Euler optimization of the time step at $t$, and $\mathcal{G} := 1 + \Delta t\mathcal{L}$. 

\begin{theorem}
    The error $\es_p(t)$ is bounded by $\es_p^\mathrm{EE}(t) + \es_p^\mathrm{TE}(t)$, where $\es_p^\mathrm{EE}(t)$ is an order $\mathcal{O}(\Delta t)$ quantity, and
    $\es_p^\mathrm{TE}(t) = \norm{\sum_{n=0}^{t/\Delta t -1} \mathcal{G}^{n} r(\cdot, t-n \Delta t)}_{L^p(\mathcal{X})}$.
\end{theorem}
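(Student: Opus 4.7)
The plan is to split the error via the triangle inequality and then handle each piece separately. First I would write
\[
\es_p(t) = \norm{u^\mathrm{TE}(\cdot, t) - u^*(\cdot,t)}_{L^p(\mathcal{X})} \le \norm{u^\mathrm{TE}(\cdot, t) - u^\mathrm{EE}(\cdot, t)}_{L^p(\mathcal{X})} + \norm{u^\mathrm{EE}(\cdot, t) - u^*(\cdot,t)}_{L^p(\mathcal{X})},
\]
where $u^*$ denotes the exact PDE solution, $u^\mathrm{EE}$ the exact forward-Euler iterate, and $u^\mathrm{TE}$ the TENG-Euler iterate. The second summand is exactly $\es_p^\mathrm{EE}(t)$, which is $\mathcal{O}(\Delta t)$ by the classical consistency-plus-stability argument for the forward Euler scheme applied to $\partial_t u = \mathcal{L} u$; I would just cite the standard result rather than redo it.

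The substantive step is to identify the first summand with the stated closed-form expression for $\es_p^\mathrm{TE}(t)$. For this I would set up a recursion for the discrete error $e(\cdot, t) := u^\mathrm{TE}(\cdot, t) - u^\mathrm{EE}(\cdot, t)$. By definition of the TENG-Euler time step, $u^\mathrm{TE}(\cdot, t+\Delta t)$ is the output of \texttt{TENG\_stepper} targeting $\mathcal{G} u^\mathrm{TE}(\cdot, t)$, and the residual is
\[
r(\cdot, t+\Delta t) = u^\mathrm{TE}(\cdot, t+\Delta t) - \mathcal{G} u^\mathrm{TE}(\cdot, t).
\]
Meanwhile $u^\mathrm{EE}(\cdot, t+\Delta t) = \mathcal{G} u^\mathrm{EE}(\cdot, t)$ with no residual. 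Subtracting gives the one-step recursion
\[
e(\cdot, t+\Delta t) = \mathcal{G}\, e(\cdot, t) + r(\cdot, t+\Delta t).
\]

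Next I would unroll this recursion from $t$ back to $0$. Assuming the initial condition is matched exactly so that $e(\cdot, 0) = 0$ (the neural network is pre-fit to $u_0$; any residual there could be absorbed into an extra $\mathcal{G}^{t/\Delta t} r(\cdot, 0)$ term if needed), induction on the number of steps yields
\[
e(\cdot, t) = \sum_{n=0}^{t/\Delta t - 1} \mathcal{G}^{n}\, r(\cdot, t - n\Delta t).
\]
Taking the $L^p(\mathcal{X})$ norm of both sides then produces the advertised formula for $\es_p^\mathrm{TE}(t)$, which together with the triangle inequality completes the bound.

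The main obstacle I anticipate is bookkeeping rather than analysis: getting the sign convention and the time-index of $r$ consistent with the statement, and justifying why $e(\cdot,0)$ can be taken to vanish (or, more carefully, handled as a separate initialization term). A secondary subtlety is that $\mathcal{G}$ need not be a bounded operator on $L^p$ in general, so if one wanted a more explicit numerical bound one would need smoothness assumptions on $\mathcal{L}$ to control $\|\mathcal{G}^n r\|_{L^p}$; but the theorem as stated only asserts the exact identity inside the norm, so this extra step is not required here.
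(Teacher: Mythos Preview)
Your proposal is correct and follows essentially the same route as the paper: split via the triangle inequality, invoke the standard first-order global error for forward Euler, set up the one-step recursion for the difference between the TENG-Euler and exact-Euler iterates, and unroll with $e(\cdot,0)=0$. The only cosmetic differences are sign and indexing conventions---the paper takes $D^{\mathrm{TE}}=u^{\mathrm{Eu}}-\hat u_\theta$ and indexes the residual by the \emph{starting} time of the step, whereas you take the negative and index by the \emph{ending} time---and your choice is in fact the one that reproduces the formula exactly as stated in the theorem; you already flagged this bookkeeping issue, and your handling of the initial-fit residual is slightly more careful than the paper's.
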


\begin{proof}
    See Appendix Theorem~\ref{thm:bound}.
\end{proof}

\section{Experiments}

\subsection{Equations and Setup} \label{sec:setup_equation}

\begin{figure*}[ht!]
    \centering
    \includegraphics[width=\linewidth]{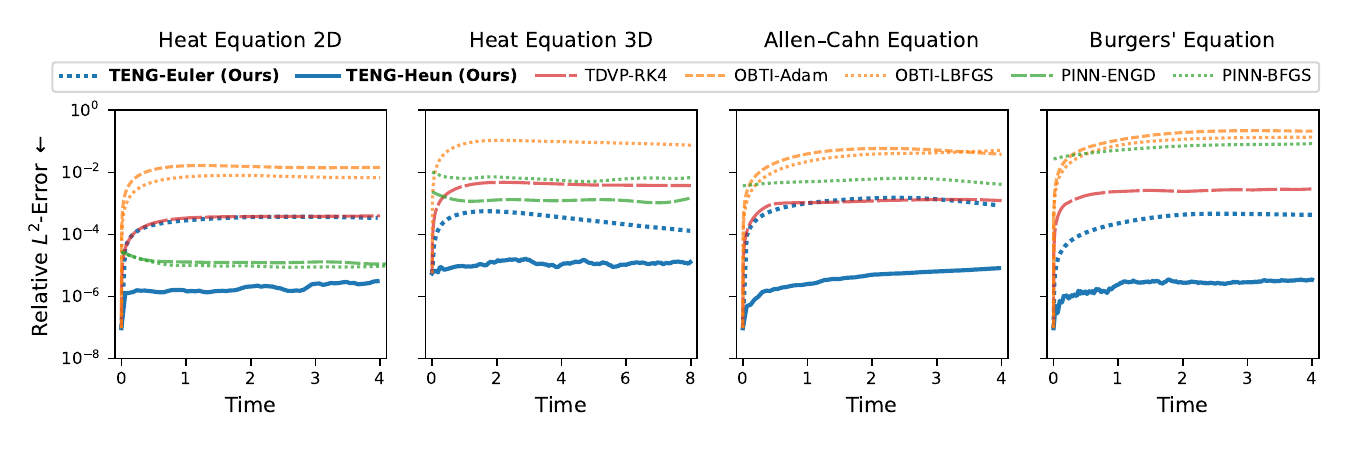}
    \caption{Benchmark of TENG, in terms of relative $L^2$-error as a function of time, against various algorithms on two- and three-dimensional heat equations, Allen--Cahn equation and Burgers' equation. All sequential-in-time methods use the same time step size $\Delta t = 0.005$ for heat and Allen--Cahn equations and $\Delta t = 0.001$ for Burgers' equation. }
    \label{fig:compare_method_all}
\end{figure*}

{\bf Heat equation.} 
The first example we choose is the two-dimensional isotropic heat equation
\begin{equation}
    \frac{\partial u}{\partial t} =  \nu \left( \frac{\partial^2 u}{\partial x_1^2} + \frac{\partial^2 u}{\partial x_2^2} \right)
\end{equation}
with a diffusivity constant $\nu = 1/10$. The heat equation describes the physical process of heat flow or particle diffusion in space.  Since it permits an analytical solution in the frequency domain, the heat equation is an ideal test bed for benchmarking (see Appendix~\ref{app:exact} for details).

{\bf Allen--Cahn equation.} 
We also consider Allen--Cahn equation 
\begin{equation}
    \frac{\partial u}{\partial t} =  \nu \left( \frac{\partial^2 u}{\partial x_1^2} + \frac{\partial^2 u}{\partial x_2^2} \right) + u - u^3
\end{equation}
with a diffusivity constant $\nu = 1/200$, which is a reaction-diffusion model that describes the process of phase separation. The Allen--Cahn equation is nonlinear and does not permit analytical solutions in general. In addition, its solution usually involves sharp boundaries and can be challenging to solve numerically. As a benchmark, we solve it using a spectral method~\cite{canuto2007spectral} (see Appendix~\ref{app:exact} for detail) and consider its solution to be a \textit{proxy} of the exact solution as the reference.

{\bf Burgers' equation.} 
We further benchmark our method on the viscous Burgers' equation
\begin{equation}
    \frac{\partial u}{\partial t} =  \nu \left( \frac{\partial^2 u}{\partial x_1^2} + \frac{\partial^2 u}{\partial x_2^2} \right) - u \left(\frac{\partial u}{\partial x_1} + \frac{\partial u}{\partial x_2}\right)
\end{equation}
with a diffusivity (viscosity) constant $\nu = 1/100$. In Appendix~\ref{app:exp}, we also explore cases with smaller $\nu$. Burgers' equation is a convection-diffusion equation that describes phenomena in various areas, such as fluid mechanics, nonlinear acoustics, gas dynamics, and traffic flow. This equation can generate sharp gradients that propagate over time, especially for small $\nu$, which can be challenging to solve. Similar to the Allen--Cahn equation, Burgers' equation does not have a general analytical solution either. Therefore, we also use the spectral method~\cite{canuto2007spectral} solution as a \textit{proxy} of the exact solution as the reference (see Appendix~\ref{app:exact} for detail).

{\bf PDE domain, boundary, and initial condition.} 
For all three equations, we first benchmark on two spatial dimensions in the domain $\mathcal{X} = [0, 2\pi) \times [0, 2\pi)$ and $\mathcal{T} = [0, 4]$, with periodic boundary condition and the following initial condition 
\begin{equation} \label{eq:initial_condition}
\begin{aligned}
    u_0(x_1, x_2) = \frac{1}{100}\bigg( &\exp \Big(3 \sin\left(x_1\right) + \sin\left(x_2\right)\Big) \\
    + &\exp \Big(-3 \sin\left(x_1\right) + \sin\left(x_2\right)\Big) \\
    - &\exp \Big(3 \sin\left(x_1\right) - \sin\left(x_2\right)\Big) \\
    - &\exp \Big(-3 \sin\left(x_1\right) - \sin\left(x_2\right)\Big)\bigg)
\end{aligned}
\end{equation}
This initial condition is anisotropic, contains peaks and valleys at four different locations, and consists of many frequencies besides the lowest frequency, which can result in challenging dynamics for various PDEs. 

For the Heat equation, we in addition consider a challenging three-dimensional benchmark, where we again choose periodic boundary conditions in the domain $\mathcal{X} = [0, 2\pi)^3$ and $\mathcal{T} = [0, 8]$. The initial condition is chosen to be a combination of sinusoidal terms in the following form so the exact solution can be analytically calculated.
\begin{equation} \label{eq:initial_condition_3d}
\begin{aligned}
    u_0(x_1, x_2, x_3) &= A_{000} \\
     & + \sum_{k_1=1}^{2}\sum_{k_2=1}^{2}\sum_{k_3=1}^{2} A_{k_1k_2k_3} \prod_{i=1}^3\cos\left(k_i x_i\right) \\
     & + \sum_{k_1=1}^{2}\sum_{k_2=1}^{2}\sum_{k_3=1}^{2} B_{k_1k_2k_3} \prod_{i=1}^3\sin\left(k_i x_i\right),
\end{aligned}
\end{equation}
where the coefficients are randomly chosen (see Appendix~\ref{app:init_cond} for coefficients used in this work). In Appendix~\ref{app:init_cond}, we also explore the heat equation on a 2D disk $\mathcal{X} = \mathcal{D}_2 = \{(x_1, x_1) : x_1^2 + x_2^2 \le 1 \}$.

For the Burgers' equation, cases with unequal domains and additional initial conditions are also explored in Appendix~\ref{app:init_cond}.

{\bf Baselines.}
While TENG is sequential-in-time, our benchmarks include both sequential-in-time (TDVP and OBTI) and global-in-time (PINN) methods. For TDVP, we choose the recently proposed sparse update method~\cite{berman2023randomized}, which has been shown to outperform previous full update methods. In addition, we use the same fourth-order Runge--Kutta integration scheme. For the OBTI method, we choose the standard Euler's integration scheme with $L(\uhat_\theta, u_\mathrm{target}) = \norm{\uhat_\theta - u_\mathrm{target}}_{L^2(\mathcal{X})}^2$ (the same loss as TENG). Both Adam and L-BFGS optimizers are used as benchmarks. For all sequential-in-time methods, we use the same time step $\Delta t = \num{5e-3}$ for the heat equation. For Allen--Cahn equation and Burgers' equation, we first compare all sequential-in-time methods with $\Delta t = \num{5e-3}$ and $\Delta t=\num{1e-3}$ respectively, before analyzing the effect of various $\Delta t$. In addition, All sequential-in-time methods share the same neural network architecture and initial parameters at $t=0$. For PINN, we test both BFGS and the recently proposed ENGD optimizer. Since Ref.~\cite{muller2023achieving} did not provide the implementation for Allen--Cahn equation and Burgers' equation, we omit the benchmark of ENGD optimizer for the two equations. We use a network architecture similar to Ref.~\cite{muller2023achieving} (see Appendix~\ref{app:hyperparam} for detail).

{\bf Error metric.}
We consider the following two error metrics:
\begin{enumerate}
    \item relative $L^2$-error at each time step 
    \begin{equation} \label{eq:rel_err}
        \varepsilon(t) = \frac{\norm{\uhat(\cdot, t)-u_\mathrm{reference}(\cdot, t)}_{L^2(\mathcal{X})}}{\norm{u_\mathrm{reference}(\cdot,  t)}_{L^2(\mathcal{X})}},
        % \left(\int \left(u(x, t) - u_\mathrm{reference}(x, t)\right)^2 \mathrm{d} x \bigg/ \int u_\mathrm{reference}(x, t)^2 \mathrm{d} x\right)^{1/2}
    \end{equation}
    \item global relative $L^2$-error integrated over all time steps
    \begin{equation} \label{eq:g_rel_err}
    \varepsilon_g = \frac{\norm{\uhat(\cdot, \cdot)-u_\mathrm{reference}(\cdot, \cdot)}_{L^2(\mathcal{X}\times\mathcal{T})}}{\norm{u_\mathrm{reference}(\cdot,  \cdot)}_{L^2(\mathcal{X}\times\mathcal{T})}},
    % \left(\int \left(u(x, t) - u_\mathrm{reference}(x, t)\right)^2 \mathrm{d} x \mathrm{d} t \bigg/ \int u_\mathrm{reference}(x, t)^2 \mathrm{d} x\mathrm{d} t\right)^{1/2}
    \end{equation}
\end{enumerate}
where $u_\mathrm{reference}$ refers to the analytical solution for the heat equation, and the spectral method solution for Allen--Cahn and Burgers' equation (see Appendix~\ref{app:exact} for detail).

\subsection{Results} \label{sec:results}

{\bf Benchmark against other methods.} We start the benchmark of our method against other methods described in Sec.~\ref{sec:setup_equation} in terms of both the relative $L^2$-error (Eq.~\eqref{eq:rel_err}) as a function of time (Fig.~\ref{fig:compare_method_all}) and the global relative $L^2$-error (Eq.~\eqref{eq:g_rel_err}) integrated over all time (Table~\ref{table:benchmark}). 

\begin{figure}[h!]
    \centering
    \includegraphics[width=\linewidth]{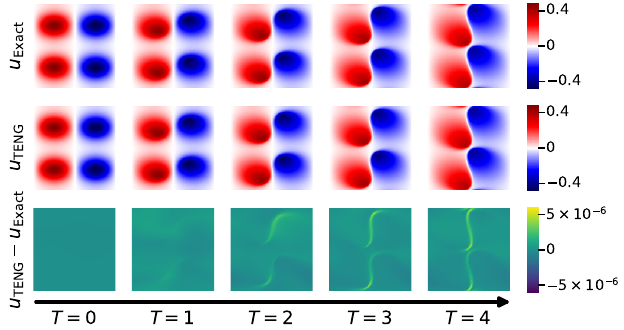}
    \caption{Reference solution, TENG solution, and the difference between them for Burgers' equation. The reference solution is generated using the spectral method, and the TENG solution shown here uses the TENG-Heun method with $\Delta t = 0.001$.}
    \label{fig:burgers_color}
\end{figure}

\begin{table*}[h!]
\centering
\adjustbox{max width=\linewidth}{
% \adjustbox{width=\linewidth}{
\setlength{\tabcolsep}{4pt} % Default value: 6pt
\begin{tabular}{l | c  c  c  c } 
\hline\hline& \\[-2.4ex]
\multirow{2}{*}{Method} & \multicolumn{4}{c}{Global Relative $L^2$-Error $\downarrow$} \\ \cline{2-5} & \\[-2.4ex]
& Heat (2D) & Heat (3D) & Allen--Cahn & Burgers'\\ 
\hline & \\[-2.2ex]
{\bf TENG-Euler (Ours)} & $3.006\times10^{-4}$ & $\it{3.664\times10^{-4}}$  & $\it{1.249\times10^{-3}}$ & $\it{3.598\times10^{-4}}$\\
{\bf TENG-Heun (Ours)} & $\boldsymbol{1.588\times10^{-6}}$ & $\boldsymbol{1.139\times10^{-5}}$ & $\boldsymbol{6.187\times10^{-6}}$ & $\boldsymbol{2.643\times10^{-6}}$\\
TDVP-RK4              & $3.279\times10^{-4}$              & $3.841\times10^{-3}$              & $1.258\times10^{-3}$         & $2.437\times 10^{-3}$  \\
OBTI-Adam              & $1.391\times10^{-2}$              &      --         & $4.966\times10^{-2}$              & $1.696\times 10^{-1}$  \\
OBTI-LBFGS              & $6.586\times10^{-3}$              & $8.743\times10^{-2}$              & $4.180\times10^{-2}$              & $1.047\times 10^{-1}$  \\
PINN-ENGD            & $1.403\times10^{-5}$         &    $2.846\times10^{-3}$      & --              & --  \\
PINN-BFGS            & $\it{1.150\times10^{-5}}$         & $1.389\times10^{-2}$         & $5.540\times10^{-3}$              & $6.538\times 10^{-2}$  \\
\hline
\end{tabular}
}
\caption{Benchmark of TENG, in terms of global relative $L^2$-error, against various algorithms on the heat equation, Allen--Cahn equation and Burgers' equation. The best result in each column is marked in boldface and the second best result is marked in italic font. Here, the same $\Delta t$ as in Fig.~\ref{fig:compare_method_all} is used.}
\label{table:benchmark}
\end{table*}

In all cases, our TENG-Heun method achieves results orders of magnitude better compared to other methods. Upon closer inspection, our TENG method with Euler's integration scheme is already comparable to or better than the TDVP method with the RK4 integration scheme. In addition, TENG-Euler is significantly better than OBTI with both Adam and L-BFGS optimizers, both of which use the same integration scheme. In Fig.~\ref{fig:burgers_color}, We show the difference between TENG-Heun and the reference solution by plotting the function evolution over time. It can be seen that our method traces closely with the reference solution with a tiny deviation on the order of $\mathcal{O}(10^{-6})$. In Appendix~\ref{app:exp}, we show additional details of runtime, and the relation between runtime and performance.

\textbf{Convergence speed and machine precision accuracy.} We further demonstrate the convergence of TENG-Euler in Fig.~\ref{fig:allen_cahn_loss} compared to OBTI-Adam and OBTI-LBFGS. In a single time step, TENG achieves a training loss value (with the squared $L^2$ distance as the loss function) close to machine precision $\mathcal{O}(10^{-14})$ with only a few iterations, while OBTI can only get to $\mathcal{O}(10^{-7})$ loss after a few hundred iterations. We also plot the final loss of each time step optimization and show that TENG stably reaches the machine precision for all time, which is seven orders of magnitude better than OBTI. Our results have shown the high accuracy of TENG compared to the existing approaches (see Appendix~\ref{app:exp} for additional results). Since the final loss values are near machine precision for all time steps, we believe the main error source of TENG-Euler comes from the Euler's expansion, instead of the neural network approximation. This is further verified later during time integration scheme comparisons.

\begin{figure}[ht!]
    \centering
    \includegraphics[width=\linewidth]{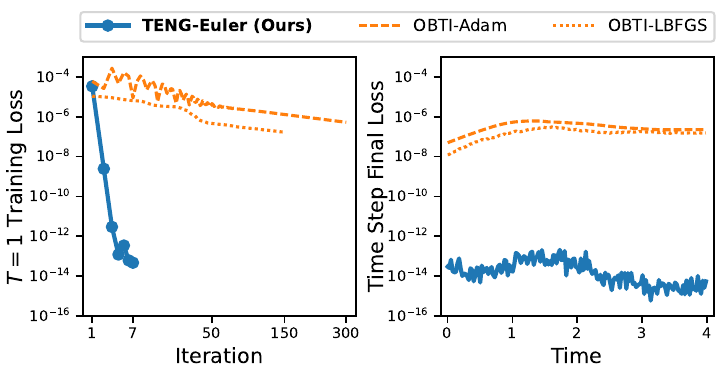}
    \caption{Training loss during the time step at $T=1$ and final training losses for all time steps for the TENG-Euler method and the two OBTI methods for Allen--Cahn equation.}
    \label{fig:allen_cahn_loss}
\end{figure}

{\bf Compare time integration schemes.} We further examine the effects of time integration schemes on TENG and compare our TENG-Euler and TENG-Heun methods with different time step sizes.

\begin{figure}[ht!]
    \centering
    \includegraphics[width=\linewidth]{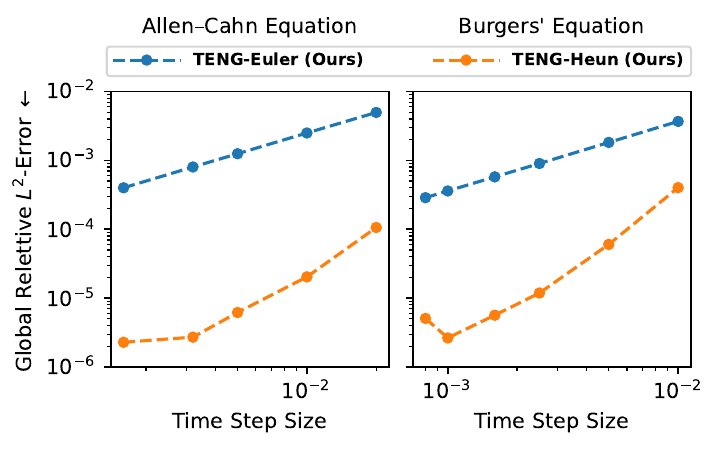}
    \caption{Comparison of different time integration schemes of TENG with respect to the time step sizes on Allen--Cahn equation and Burgers' equation, using global relative $L^2$-error as a metric.}
    \label{fig:compare_integrator_dt}
\end{figure}

In Fig.~\ref{fig:compare_integrator_dt}, we show the global relative $L^2$-error (defined in Eq.~\eqref{eq:g_rel_err}) as a function of time step size $\Delta t$. It can be seen from the figure that while TENG-Euler already achieves a global relative $L^2$-error of $\mathcal{O}(10^{-4})$ for small $\Delta t$, using a higher order integration scheme significantly reduces the error to $\mathcal{O}(10^{-6})$. In addition, TENG-Heun can maintain the low error even at relatively large $\Delta t$, signifying the advantage of our implementation of higher-order integration schemes. We note that, for small time step sizes, the accumulation of per-step error dominates, while for large time step sizes, the discretization error from the integration scheme dominates, resulting in the TENG-Heun with the smallest $\Delta t$ not as good as larger $\Delta t$. In addition, the curves for TENG-Euler and TENG-Heun have different slopes. Both phenomena are consistent with numerical analysis results for traditional PDE solvers. Additional explorations with TENG-RK4 method can be found in Appendix~\ref{app:exp}.

\section{Discussion and Conclusion}
We introduce \textit{Time-Evolving Natural Gradient}, a novel approach that generalizes time-dependent variational principles and optimization-based time integration, resulting in a highly accurate and efficient PDE solver utilizing natural gradient. TENG, encompassing algorithms like TENG-Euler and advanced variants such as TENG-Heun, significantly outperforms existing state-of-the-art methods in accuracy, achieving machine precision in solving a range of PDEs. For future work, it would be interesting to explore the application of TENG in more diverse and complex real-world scenarios, particularly in areas where traditional PDE solutions are currently unfeasible. While this work is focused on two- and three-dimensional (spatial) scalar-valued PDEs with periodic boundary conditions, the same method can be considered for generalizing to vector-valued PDE in other numbers of dimensions, and other boundary conditions, such as the Dirichlet boundary condition or the Neumann boundary condition. 
It will also be important to develop TENG for broader classes of PDEs besides initial value problems with applications to nonlinear and multi-scale physics PDEs in various domains. Advancing TENG's integration with cutting-edge machine learning architectures and optimizing its performance for large-scale computational tasks will be a vital area of research for computational science and engineering.

\section*{Acknowledgements}

The authors acknowledge support from the National Science Foundation under Cooperative Agreement PHY-2019786 (The NSF AI Institute for Artificial Intelligence and Fundamental Interactions, \url{http://iaifi.org/}). This material is based upon work supported by the U.S. Department of Energy, Office of Science, National Quantum Information Science Research Centers, Co-design Center for Quantum Advantage (C2QA) under contract number DE-SC0012704. This material is also in part based upon work supported by the Air Force Office of Scientific Research under the award number FA9550-21-1-0317. The research was sponsored by the United States Air Force Research Laboratory and the Department of the Air Force Artificial Intelligence Accelerator and was accomplished under Cooperative Agreement Number FA8750-19-2-1000. The authors acknowledge the MIT SuperCloud and Lincoln Laboratory Supercomputing Center for providing (HPC, database, consultation) resources that have contributed to the research results reported within this paper. Some computations in this paper were run on the FASRC cluster supported by the FAS Division of Science Research Computing Group at Harvard University.

\section*{Impact Statement}
Through the advancement of the \textit{Time-Evolving Natural Gradient} (TENG), which solves partial differential equations (PDEs) with enhanced accuracy and efficiency, our work exhibits broader impact spans multiple disciplines, including but not limited to, climate modeling, fluid dynamics, materials science, and biomedical engineering. While the primary goal of this work is to push forward computational techniques within the field of machine learning, it inherently carries the potential for significant societal benefits, such as improved environmental forecasting models, more efficient engineering designs, and advancements in medical technology. Ethically, the deployment of TENG should be approached with consideration to ensure that enhanced computational capabilities translate into positive outcomes without unintended consequences given its accuracy and reliability in real-world applications. There are no specific ethical concerns that we feel must be highlighted at this stage; however, we acknowledge the importance of ongoing evaluation of the societal and ethical implications as this technology is applied. This acknowledgment aligns with our commitment to responsible research and innovation, understanding that the true value of advancements in machine learning is realized through their contribution to societal progress and well-being.

\nocite{langley00}

\newpage

\bibliography{reference}
\bibliographystyle{icml2024}

%%%%%%%%%%%%%%%%%%%%%%%%%%%%%%%%%%%%%%%%%%%%%%%%%%%%%%%%%%%%%%%%%%%%%%%%%%%%%%%
%%%%%%%%%%%%%%%%%%%%%%%%%%%%%%%%%%%%%%%%%%%%%%%%%%%%%%%%%%%%%%%%%%%%%%%%%%%%%%%
% APPENDIX
%%%%%%%%%%%%%%%%%%%%%%%%%%%%%%%%%%%%%%%%%%%%%%%%%%%%%%%%%%%%%%%%%%%%%%%%%%%%%%%
%%%%%%%%%%%%%%%%%%%%%%%%%%%%%%%%%%%%%%%%%%%%%%%%%%%%%%%%%%%%%%%%%%%%%%%%%%%%%%%
\newpage
\appendix
\onecolumn
\renewcommand\theequation{\thesection.\arabic{equation}}
\setcounter{equation}{0}
\renewcommand\thefigure{\thesection.\arabic{figure}}
\setcounter{figure}{0}
\renewcommand\thetable{\thesection.\arabic{table}}
\setcounter{table}{0}

\section{Additional Theoretical Results} \label{app:theory}

\begin{theorem} \label{thm:tdvp_reparam}
    TDVP is not reparameterization invariant with $\Delta t$.
\end{theorem}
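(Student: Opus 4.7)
The plan is to exhibit an explicit counterexample: pick a family $\uhat_\theta$ and a smooth nonlinear diffeomorphism $\psi:\Theta\to\Theta$, and show that applying one forward-Euler step of the TDVP ODE in the two parameterizations $\theta$ and $\phi:=\psi(\theta)$ produces different points of $\mathcal{M}_\Theta$ at order $(\Delta t)^2$. The reason to expect this to work is that the continuous TDVP direction $\sum_j J_{(x),j}\partial_t\theta_j$ lies intrinsically in $T_{\uhat_{\theta_t}}\mathcal{M}_\Theta$ and is therefore independent of coordinates, so the \emph{only} place where reparameterization invariance can break is the discretization itself, which takes a straight line in $\theta$-coordinates; nonlinear diffeomorphisms do not preserve straight lines.

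First I would set up the reparameterized family $\uhat'_\phi := \uhat_{\psi^{-1}(\phi)}$, compute the new Jacobian via the chain rule, and use the normal equations of Eq.~\eqref{eq:tdvp} to verify that the continuous velocity indeed transforms as $\partial_t\phi = D\psi(\theta)\,\partial_t\theta$ (a short but necessary sanity check). I would then pull the Euler step taken in $\phi$-coordinates back to $\theta$-coordinates via $\psi^{-1}$ and Taylor expand. Using the identity $D^2\psi^{-1}[D\psi\,v,D\psi\,w]=-(D\psi)^{-1}D^2\psi[v,w]$ obtained by differentiating $\psi^{-1}\!\circ\psi=\mathrm{id}$ twice, the pulled-back update takes the form
\begin{equation}
\theta_t + \Delta t\,\partial_t\theta_t \;-\; \tfrac{1}{2}(\Delta t)^2\,(D\psi(\theta_t))^{-1}D^2\psi(\theta_t)[\partial_t\theta_t,\partial_t\theta_t] + \mathcal{O}(\Delta t^3),
\end{equation}
which differs from the native $\theta$-Euler step $\theta_t+\Delta t\,\partial_t\theta_t$ at second order whenever $\psi$ is nonlinear ($D^2\psi\neq 0$) and $\partial_t\theta_t\neq 0$.

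The main obstacle is that a parameter-space discrepancy need not always imply a function-space discrepancy: in degenerate cases the $(\Delta t)^2$ shift could accidentally lie in the kernel of the Jacobian $J$ and be annihilated by $\uhat$. To handle this, I would close the argument with a minimal explicit instance that avoids the issue: for example the one-parameter affine family $\uhat_\theta(x)=\theta f(x)$ with some fixed nonzero $f\in L^2(\mathcal{X})$, the reparameterization $\psi(\theta)=\theta+c\theta^{2}$ with $c\neq 0$, and any operator $\mathcal{L}$ and base point $\theta_t$ giving $\partial_t\theta_t\neq 0$. Here the Jacobian acts as multiplication by $f$ and has no kernel, so the $\mathcal{O}(\Delta t^2)$ shift in $\theta$ lifts to an $\mathcal{O}(\Delta t^2)$ discrepancy in $\uhat$ on $\mathcal{M}_\Theta$, proving that TDVP with nonzero $\Delta t$ is not reparameterization invariant.
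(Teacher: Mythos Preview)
Your proposal is correct, and both your route and the paper's ultimately furnish a counterexample, but the styles differ.

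The paper's proof is a bare-hands computation: it takes the zero-dimensional ODE $\partial_t u = u$, the two parameterizations $\hat u_\theta=\theta$ and $\hat v_\phi=e^\phi$, reads off the induced parameter ODEs $\dot\theta=\theta$ and $\dot\phi=1$, applies one Euler step to each, and observes that the resulting function values $\theta_0(1+\Delta t)$ and $\theta_0 e^{\Delta t}$ disagree for every nonzero $\Delta t$. No Taylor expansion, no general framework---just a one-line exact discrepancy.

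Your approach is more structural: you first verify that the continuous TDVP velocity transforms covariantly, isolate the Euler discretization as the only possible culprit, derive the generic $(\Delta t)^2$ defect $-\tfrac12 (D\psi)^{-1}D^2\psi[\dot\theta,\dot\theta]$, and then specialize to rule out accidental cancellation in the Jacobian kernel. What this buys is an explanation of \emph{why} invariance fails (nonlinear reparameterizations do not preserve straight lines in coordinates) and a demonstration that the failure is generic rather than a fluke of one example. The trade-off is more machinery and an argument that is asymptotic in $\Delta t$, whereas the paper's exact calculation handles all $\Delta t>0$ at once with zero overhead. Your concrete instance $\hat u_\theta=\theta f$ with $\psi(\theta)=\theta+c\theta^2$ is a polynomial cousin of the paper's $\log/\exp$ pair, and works for the same underlying reason.
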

\begin{proof}
    We will construct an explicit counter-example. For simplicity, consider a zero-dimensional PDE (an ODE) $\partial_t u = u$, whose solution is $u = u_0  \exp(t)$. Let $\hat{u}_\theta = \theta$ and $\hat{v}_\phi = \exp \phi$, which are just reparameterizations of each other. In the parameter space, the two ODEs read $\partial_t \theta = \theta$ and $\partial_t\phi = 1$. Let both of them evolve for a discrete time step $\Delta t$ from $t=0$, we have $\theta_{\Delta t} = \theta_0 + \Delta t \theta_0$ and $\phi_{\Delta t} = \phi_0 + \Delta t$. Plugging back into the functions, $\uhat_{\theta_{\Delta t}} = \uhat_{\theta_{0}} + \Delta t \uhat_{\theta_{0}}$ and $\hat{v} = \hat{v}_{\theta_{0}} \exp(\Delta t)$ It is evidential that the two parameterizations give different solutions.
\end{proof}

{\bf TENG can be reduced to TDVP under certain assumptions.} For simplicity, we will be focusing on the first-order Euler's method. Consider Subroutine~\hyperref[alg:match]{\texttt{TENG\_stepper}}. Let the loss function $L(\uhat_{\theta_t}, u_\mathrm{target}) = \norm{\uhat_{\theta_t}- u_\mathrm{target}}_{L^2(\mathcal{X})}^2$ and $N_\mathrm{it} = 1$. For simplicity, let $u_\mathrm{target} = \uhat_{\theta_t} + \Delta t \mathcal{L} \uhat_{\theta_t}$ be the first-order Euler expansion. Then,
\begin{equation}
    \frac{\partial L}{\partial \uhat_{\theta_t}}(x) = 2 (\uhat_{\theta_t}(x) - u_\mathrm{target}(x)) \equiv 2 \Delta t \mathcal{L} \uhat_{\theta_t}.
\end{equation}
Choosing $\alpha = 1/2$, we have $\Delta u = \Delta t \mathcal{L} \uhat_{\theta_t}$. Then, the least square equation becomes
\begin{equation}
     \Delta\theta = \argmin_{\Delta \theta \in \mathbb{R}^{N_p}} \norm{\Delta t \mathcal{L} \uhat_{\theta_t} (\cdot) - \textstyle\sum_{j}J_{(\cdot), j}{\Delta\theta}_j}_{L^2(\mathcal{X})}^2,
\end{equation}
which is the same as the TDVP algorithm with nonzero time step sizes.

{\bf TENG can be reduced to OBTI under certain assumptions.} Let $N_\mathrm{it} > 0$. As mentioned in the main paper, approximate methods can be used to solve the least square equation in Subroutine~\hyperref[alg:match]{\texttt{TENG\_stepper}}. Here, let its solution be approximated by a single gradient descent, which gives rise to
\begin{equation}
    \Delta \theta_j = \int J_{(x), j} \Delta u(x) \mathrm{d} x \equiv -\alpha \int \frac{\partial{\uhat_\theta(x)}}{\partial \theta_j} \frac{\partial L}{\partial \uhat_\theta}(x) \mathrm{d} x = -\alpha \frac{\partial L}{ \partial \theta_j},
\end{equation}
which reduces to the regular gradient descent in the $\theta$-space with many iterations.

{\bf Hilbert natural gradient formulation of Subroutine~\hyperref[alg:match]{\texttt{TENG\_stepper}}}. Consider the least square equation
\begin{equation} \label{eq:app_teng_lstsq}
    \Delta\theta = \argmin_{\Delta \theta \in \mathbb{R}^{N_p}} \norm{\Delta u (\cdot) - \textstyle\sum_{j}J_{(\cdot), j}{\Delta\theta}_j}_{L^2(\mathcal{X})}^2.
\end{equation}
It's solution is given by the normal equation $J^T J \Delta\theta = J^T \Delta u$
where we use the matrix notation and omit the indices. The solution to the normal equation is given by 
$\Delta\theta = (J^T J)^{-1} J^T \Delta u.$
Notice that 
\begin{equation}
    (J^T \Delta u)_j = \int J_{(x), j} \Delta u(x) \mathrm{d} x \equiv -\alpha \int \frac{\partial{\uhat_\theta(x)}}{\partial \theta_j} \frac{\partial L}{\partial \uhat_\theta}(x) \mathrm{d} x = -\alpha \frac{\partial L}{ \partial \theta_j}.
\end{equation}
In addition,
\begin{equation}
    (J^T J)_{i,j} = \int \frac{\partial{\uhat_\theta(x)}}{\partial \theta_i}  \frac{\partial{\uhat_\theta(x)}}{\partial \theta_j} \mathrm{d} x \equiv G_{i, j}(\theta),
\end{equation}
where $G(\theta)$ is the Hilbert gram matrix~\cite{muller2023achieving}. Therefore, Eq.~\eqref{eq:app_teng_lstsq} can be written as the Hilbert natural gradient descent
\begin{equation} \label{eq:app_teng_ngd}
    \Delta \theta_i = - \alpha \sum_j {G^{-1}(\theta)_{i,j}} \dfrac{\partial L(\uhat_\theta, u_\mathrm{target})}{\partial \theta_j}(x).
\end{equation}
We note that while these two formulations are mathematically equivalent, the least square formulation has a few practical advantages. First, it allows for more stable numerical solvers. In general, the Hilbert gram matrix has a condition number twice as large as the original Jacobian matrix. If the original least square equation is ill-conditioned, Eq.~\eqref{eq:app_teng_ngd} is even worse. In addition, when the least square equation is underdetermined, solving the original least square problem gives the minimum norm solution, whereas Eq.~\eqref{eq:app_teng_ngd} has to be solved with pseudo-inverse, which can be numerically unstable in practice.

{\bf Generalized Gauss--Newton formulation of Subroutine~\hyperref[alg:match]{\texttt{TENG\_stepper}}}. Let the loss function be the squared $L^2$-distance. Define $r(x):= \uhat_\theta(x) - u_\mathrm{target}(x)$.The derivative of loss in function space is given by
\begin{equation}
    \frac{\partial L}{\partial \uhat_\theta}(x) = 2 (\uhat_\theta(x) - u_\mathrm{target}(x)) \equiv 2 r(x).
\end{equation}
In matrix notation, the iteration above becomes
\begin{equation}
    \Delta \theta = -\alpha (J^T J)^{-1} J^T r.
\end{equation}
When $\alpha = 1/2$, this reduces to one iteration of the Gauss--Newton method. Therefore, Subroutine~\hyperref[alg:match]{\texttt{TENG\_stepper}} can also be viewed as a generalized Gauss--Newton method.

\begin{theorem}\label{thm:bound}
    The error $\es_p(t)$ is bounded by $\es_p^\mathrm{EE}(t) + \es_p^\mathrm{TE}(t)$, where $\es_p^\mathrm{EE}(t)$ is an order $\mathcal{O}(\Delta t)$ quantity, and
    $\es_p^\mathrm{TE}(t) = \norm{\sum_{n=0}^{t/\Delta t -1} \mathcal{G}^{n} r(\cdot, t-n \Delta t)}_{L^p(\mathcal{X})}$.
\end{theorem}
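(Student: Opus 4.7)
The plan is to apply the triangle inequality to split the total error into the exact-Euler error and the TENG-Euler discrepancy, then handle each piece separately. Let $u(\cdot,t)$ denote the exact solution, $u^{\mathrm{E}}(\cdot,t)$ the sequence produced by the pure forward-Euler iteration $u^{\mathrm{E}}_{k+1} = \mathcal{G} u^{\mathrm{E}}_k$ initialized at $u_0$, and $\hat{u}(\cdot,t) := \hat{u}_{\theta_t}$ the TENG-Euler iterate. Then by definition
\begin{equation*}
\es_p(t) = \norm{\hat u(\cdot,t) - u(\cdot,t)}_{L^p(\mathcal{X})} \le \norm{\hat u(\cdot,t) - u^{\mathrm{E}}(\cdot,t)}_{L^p(\mathcal{X})} + \norm{u^{\mathrm{E}}(\cdot,t) - u(\cdot,t)}_{L^p(\mathcal{X})} = \es_p^{\mathrm{TE}}(t) + \es_p^{\mathrm{EE}}(t).
\end{equation*}
This yields the claimed bound.

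For $\es_p^{\mathrm{EE}}(t)$, I would invoke the standard consistency/stability analysis of the forward Euler scheme: a Taylor expansion of the exact flow around each node gives a local truncation error of order $\mathcal{O}(\Delta t^2)$, which under a Lipschitz/stability assumption on $\mathcal{G}$ accumulates via a discrete Gr\"onwall argument to $\mathcal{O}(\Delta t)$ over a fixed horizon. I would simply cite this classical result rather than redo the calculation, since it is orthogonal to the TENG-specific content.

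The heart of the proof is the explicit recursion for $\es_p^{\mathrm{TE}}(t)$. First I would define the per-step residual precisely: after the TENG optimization at physical time $t$, the resulting network satisfies
\begin{equation*}
\hat u(\cdot, t) = \mathcal{G}\,\hat u(\cdot, t-\Delta t) + r(\cdot, t),
\end{equation*}
i.e., $r(\cdot,t)$ measures the gap between the actual optimized iterate and the ideal Euler target $\mathcal{G}\,\hat u(\cdot, t-\Delta t)$. Letting $e_k := \hat u(\cdot, k\Delta t) - u^{\mathrm{E}}(\cdot, k\Delta t)$ and noting $e_0 = 0$ (both schemes start from the same initial condition), subtracting the Euler recursion from the TENG recursion gives the linear recurrence $e_{k+1} = \mathcal{G} e_k + r(\cdot, (k+1)\Delta t)$. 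Unrolling this recurrence, which is straightforward because $\mathcal{G}$ is a fixed linear operator, yields
\begin{equation*}
e_{t/\Delta t} = \sum_{n=0}^{t/\Delta t - 1} \mathcal{G}^{n}\, r(\cdot,\, t - n\Delta t),
\end{equation*}
and taking the $L^p(\mathcal{X})$-norm gives exactly the stated expression for $\es_p^{\mathrm{TE}}(t)$.

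The only subtle point, and what I would flag as the main obstacle, is pinning down the sign convention and indexing of the residual so that the recurrence closes cleanly; once that bookkeeping is fixed, everything else is either a triangle inequality, a one-line induction, or an appeal to classical Euler consistency. A secondary concern is that the bound on $\es_p^{\mathrm{EE}}$ being $\mathcal{O}(\Delta t)$ implicitly requires some regularity of $\mathcal{L}$ and of the exact solution; I would state this as a standing assumption rather than attempt to weaken it.
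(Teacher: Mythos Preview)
Your proposal is correct and follows essentially the same route as the paper: split via the triangle inequality into the exact-Euler error and the TENG-vs-Euler discrepancy, invoke classical first-order consistency of forward Euler for the former, and unroll the linear recurrence $e_{k+1}=\mathcal{G}e_k+r$ for the latter. Your residual convention $\hat u(\cdot,t)=\mathcal{G}\hat u(\cdot,t-\Delta t)+r(\cdot,t)$ is in fact the one that makes the stated formula $\sum_{n=0}^{t/\Delta t-1}\mathcal{G}^n r(\cdot,t-n\Delta t)$ come out exactly right, so the bookkeeping concern you flagged is already resolved by your choice; the only implicit assumption worth making explicit is $e_0=0$, i.e., that the exact-Euler sequence is initialized at $\hat u_{\theta_0}$ (or equivalently that the initial fit is exact).
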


\begin{proof}
Denote $D(\cdot, t) = u(\cdot, t) - \hu_{\theta}(\cdot, t) = (u(\cdot,t) - u^\mathrm{Eu}(\cdot,t)) + (u^\mathrm{Eu}(\cdot,t) - \hu_{\theta}(\cdot,t)) \equiv D^{EE}(\cdot,t) + D^{TE}(\cdot,t)$, where $u(\cdot,t)$ is the exact solution, $u^\mathrm{Eu}(\cdot,t)$ is the solution from Euler method, $\hu_{\theta}(\cdot,t)$ is the TENG-Euler solution at time $t$. By definition, $\es_p(t) = \norm{D(\cdot, t)}_{L^p(\mathcal{X})}$, $\norm{u(\cdot,t) - u^\mathrm{Eu}(\cdot,t)}_{L^p(\mathcal{X})} = \norm{D^{EE}(\cdot, t)}_{L^p(\mathcal{X})} =\es^{EE}(t)$, and $\norm{u^\mathrm{Eu}(\cdot,t) - \hu(\cdot,t)}_{L^p(\mathcal{X})} = \norm{D^{TE}(\cdot, t)}_{L^p(\mathcal{X})} = \es^{TE}(t)$. It follows by the triangular inequality that $\es_p(t) \leq \es_p^\mathrm{EE}(t) + \es_p^\mathrm{TE}(t)$. Since the Euler method is a first-order method, $\es_p^\mathrm{EE}(t)$ has an error of order $O(\Delta t)$. 

Denote the optimization error in time $t$ as $r(\cdot,t)$, such that $\hu_{\theta}(\cdot,t+\Delta t) - \mathcal{G} \hu_{\theta}(\cdot,t)=r(\cdot,t)$. It follows that $u^\mathrm{Eu}(\cdot,t+\Delta t)-D^{TE}(\cdot,t+\Delta t) - \mathcal{G}(u^\mathrm{Eu}(\cdot,t)-D^{TE}(\cdot,t))=r(\cdot,t)$, which implies that $D^{TE}(\cdot,t+\Delta t) = \mathcal{G} D^{TE}(\cdot,t)-r(\cdot,t)$ due to the cancellation of $u^{Eu}(\cdot,t)-\mathcal{G} u^{Eu}(\cdot,t)$ from the exact Euler method. By induction, $\es_p^\mathrm{TE}(t) = \norm{\sum_{n=0}^{t/\Delta t -1} \mathcal{G}^{n} r(\cdot, t-n \Delta t)}_{L^p(\mathcal{X})}$.

\end{proof}

\section{Additional Experimental Results} \label{app:exp}
In this section, we show additional benchmark results. In Fig.~\ref{fig:allen_cahn_loss_many}, we show the training losses of TENG and OBTI methods during the time steps at $T=0.8, 1.6, 2.4, 3.2, 4.0$ for Allen--Cahn equation. For Euler's integration scheme, we compare our TENG-Euler method with both OBTI-Adam and OBTI-LBFGS algorithms and find that our algorithm consistently achieves loss values orders of orders of magnitudes better than OBTI, with only 7 iterations. For TENG-Heun method, each time step requires training a set of intermediate parameters. Therefore, each figure includes two curves.  As shown in the figure, all stages converge to machine precision within a small number of iterations. 

\begin{figure}[ht!]
    \centering
    \includegraphics[width=\linewidth]{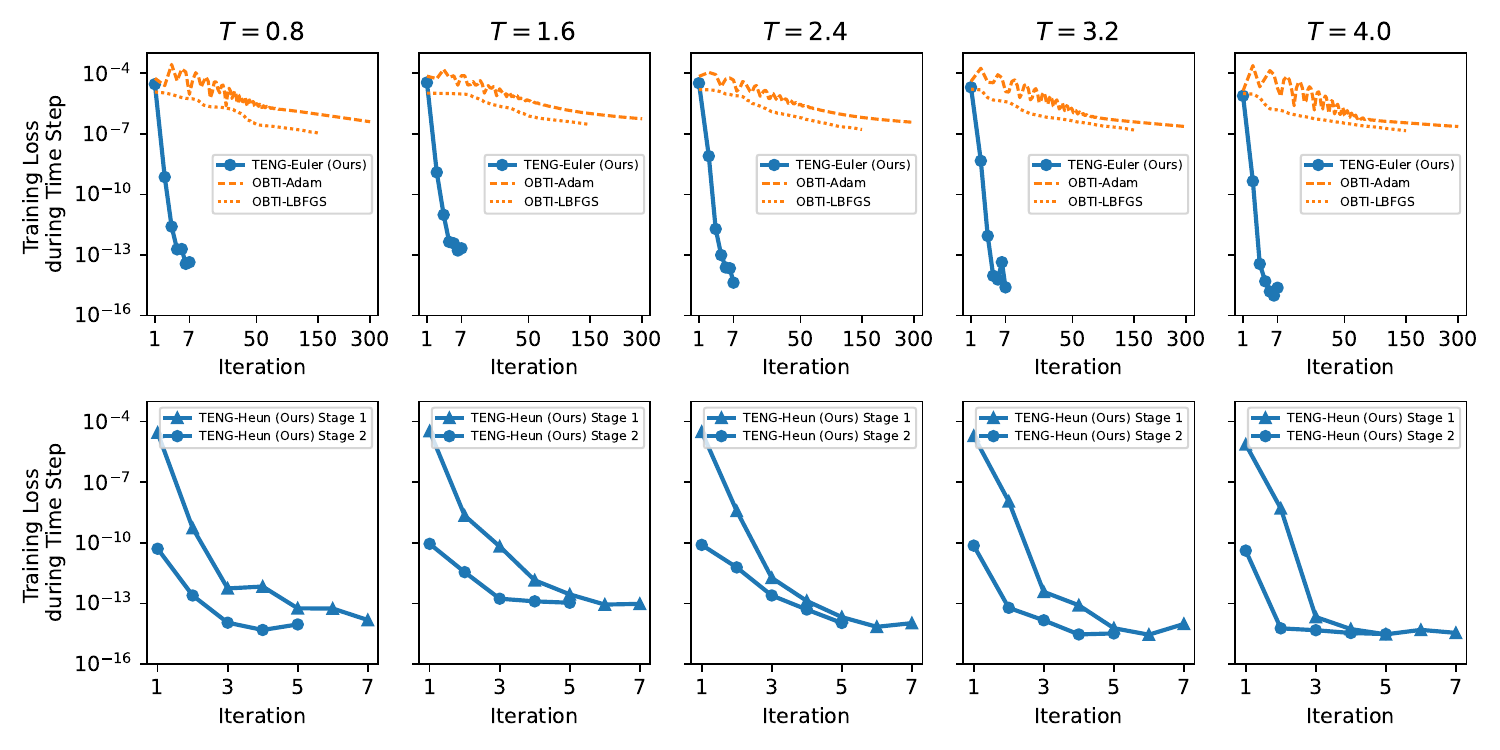}
    \caption{Training loss during many time steps for TENG-Euler, TENG-Heun, and the two OBTI methods for Allen--Cahn equation with a time step size $\Delta t = 0.005$. TENG-Heun method requires two training stages, one for $\theta_\mathrm{temp}$, and the other for $\theta_{t+\Delta t}$. Therefore, each figure contains two curves.}
    \label{fig:allen_cahn_loss_many}
\end{figure}

In Fig.~\ref{fig:heat_color_all} \ref{fig:burgers_color_all} and \ref{fig:allen_cahn_color_all}, we show the density plots for the two-dimensional heat equation, Burgers' equation and Allen--Cahn equation. In each figure, we plot the reference solution (see Appendix~\ref{app:exact} for details on obtaining the reference solution), the TENG-Heun solution, the TDVP-RK4 solution, the OBTI-LBFGS solution and the PINN-BFGS solution, and their difference to the reference solution. In all cases, the TENG-Heun solution closely tracks the reference solution, with a maximum error of order $\mathcal{O}(10^{-6})$, whereas solutions generated by other methods can have relatively larger solutions.

\begin{figure}[ht!]
    \centering
    \includegraphics[width=0.9\linewidth]{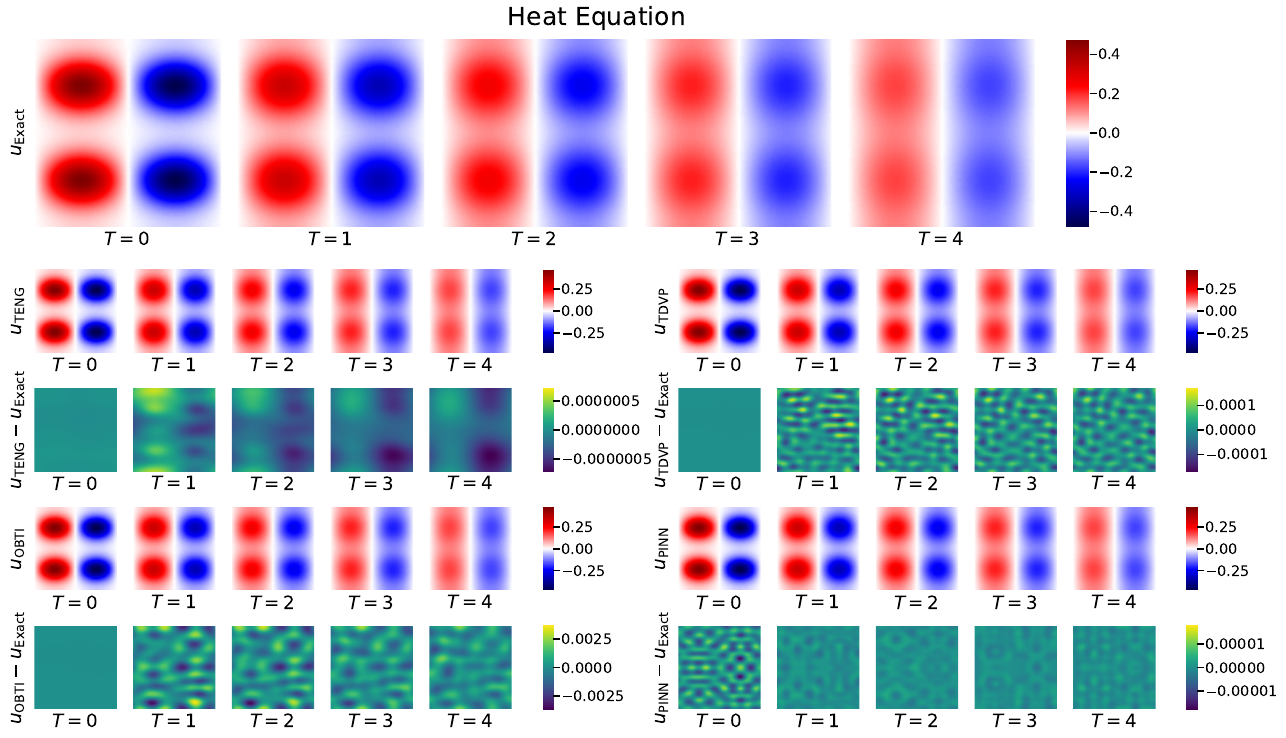}
    \caption{Exact, TENG, TDVP, OBTI, and PINN solutions and their differences from the reference solution for the two-dimensional heat equation. The reference solution is generated using the analytical solution, the TENG solution shown here uses the TENG-Heun method, the OBTI shown here uses the OBTI-LBFGS method, and the PINN shown here uses the PINN-BFGS method. The error of our TENG method is orders of magnitude smaller than other methods.}
    \label{fig:heat_color_all}
\end{figure}

\begin{figure}[ht!]
    \centering
    \includegraphics[width=0.9\linewidth]{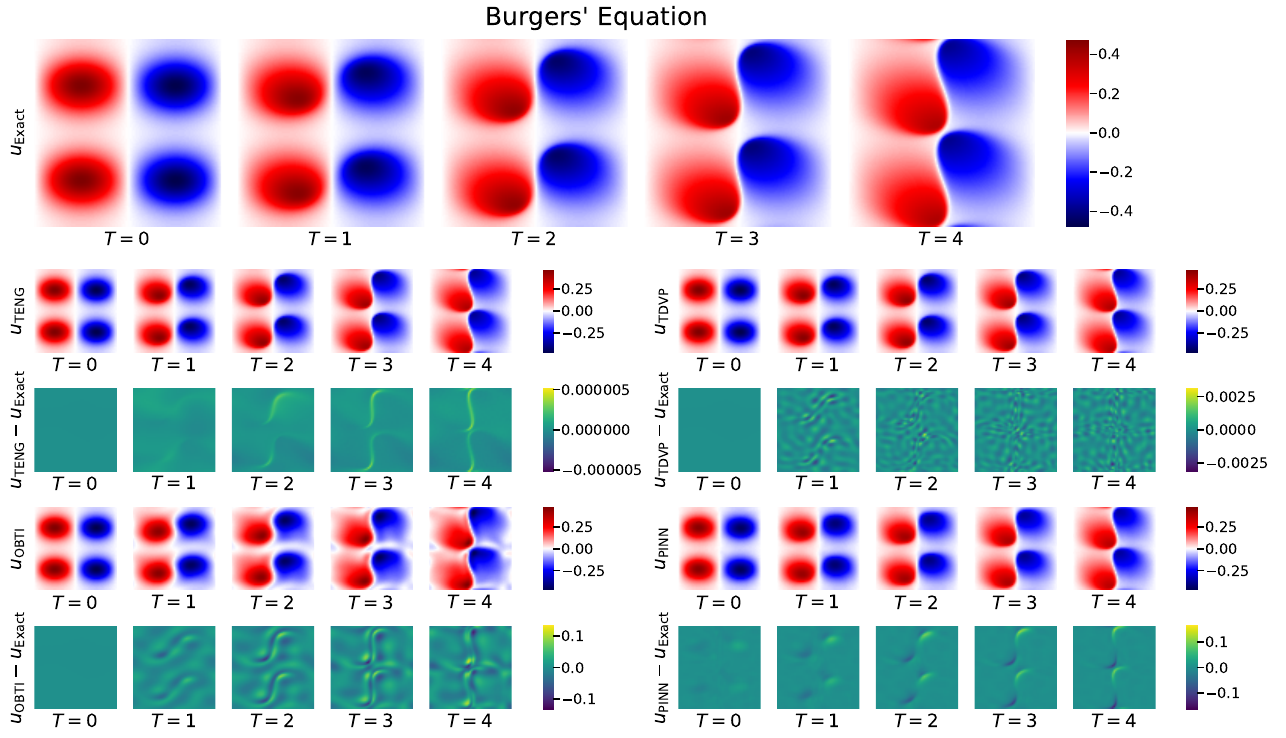}
    \caption{Exact, TENG, TDVP, OBTI and PINN solutions and their differences from the reference solution for Burgers' equation. The reference solution is generated using the spectral method, the TENG solution shown here uses the TENG-Heun method, the OBTI shown here uses the OBTI-LBFGS method, and the PINN shown here uses the PINN-BFGS method. The error of our TENG method is orders of magnitude smaller than other methods.}
    \label{fig:burgers_color_all}
\end{figure}

\begin{figure}[ht!]
    \centering
    \includegraphics[width=0.9\linewidth]{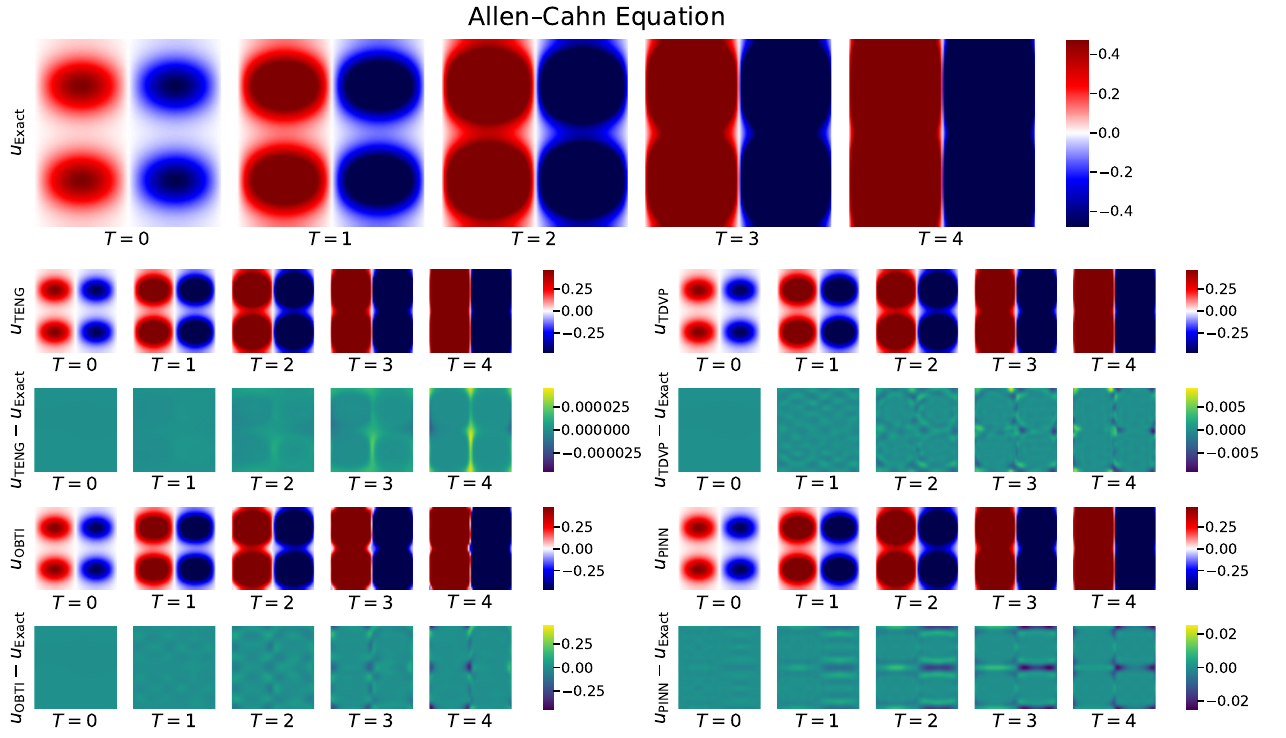}
    \caption{Exact, TENG, TDVP, OBTI and PINN solutions and their differences from the reference solution for Allen--Cahn equation. The reference solution is generated using the spectral method, the TENG solution shown here uses the TENG-Heun method, the OBTI shown here uses the OBTI-LBFGS method, and the PINN shown here uses the PINN-BFGS method. The error of our TENG method is orders of magnitude smaller than other methods.}
    \label{fig:allen_cahn_color_all}
\end{figure}

In Fig.~\ref{fig:pinn_training}, we plot the global relative $L^2$-error of PINN during training. We show both PINN-ENGD and PINN-BFGS for the heat equation, and PINN-BFGS for Allen--Cahn equation and Burgers' equation. We observe that while PINN-ENGD converges very quickly on the heat equation, PINN-BFGS eventually surpases PINN-ENGD. In addition, Allen--Cahn equation and Burgers' equation appear to be significantly more challenging for PINN, where it finds difficulty optimizing the error to below $\mathcal{O}(10^{-2})$.

\begin{figure}[ht!]
    \centering
    \includegraphics[width=\linewidth]{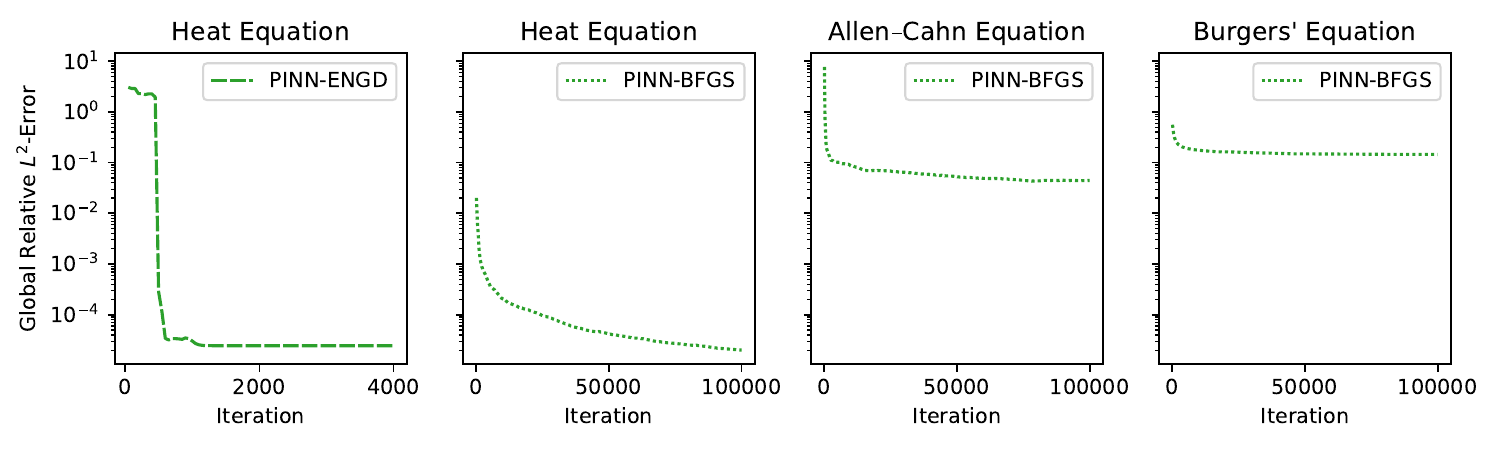}
    \caption{Global relative $L^2$-error for PINN as a function of training iterations for the two-dimensional heat equation, Allen--Cahn equation, and Burgers' equation.}
    \label{fig:pinn_training}
\end{figure}

In Fig.~\ref{fig:compare_integrator_app}, we further explore the advantage of higher-order integration schemes. In particular, we plot the relative $L^2$-error as a function of time for TENG-Euler, TENG-Heun, and TENG-RK4. For the heat equation, TENG-RK4 fails to significantly surpass TENG-Heun, which could be attributed to the error accumulation under small $\Delta t$ in this case. For the other two equations, we explore larger $\Delta t$ and find that TENG-RK4 is able to achieve small errors, while TENG-Heun's performance starts to deteriorate.

\begin{figure}[ht!]
    \centering
    \includegraphics[width=\linewidth]{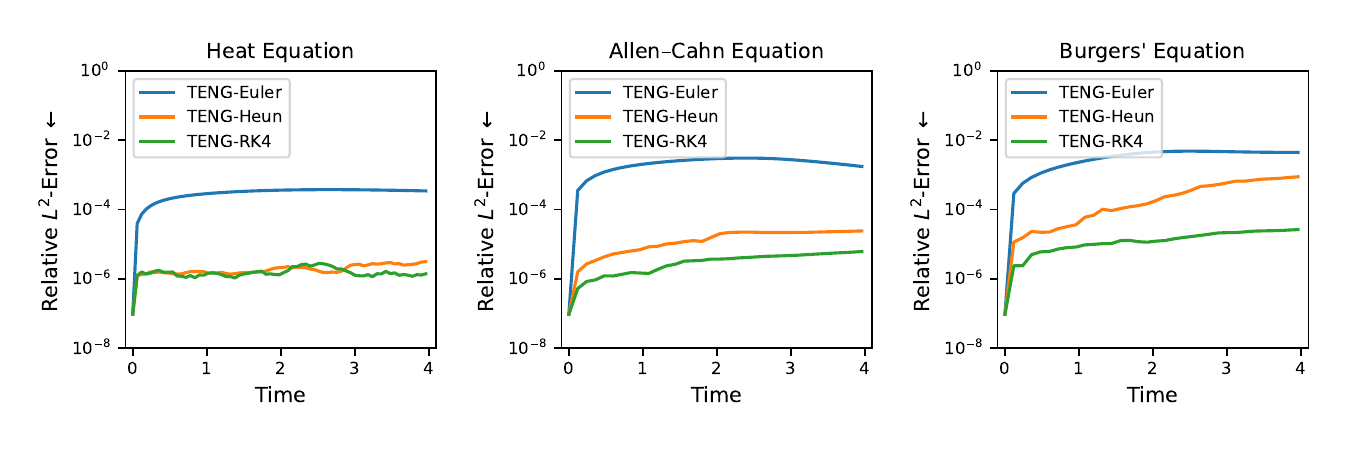}
    \caption{Relative $L^2$-error as a function of time for the two-dimensional heat equation ($\Delta t = 0.005$), Allen--Cahn equation ($\Delta t = 0.01$), and Burgers' equation ($\Delta t = 0.01$) for various integration methods.}
    \label{fig:compare_integrator_app}
\end{figure}

In Table.~\ref{table:runtime}, we report the runtime for the runs in Fig.~\ref{fig:compare_method_all}. Our TENG methods significantly improve the simulation accuracy with a similar runtime to other algorithms. We note that TENG-Heun is roughly twice as costly as TENG-Euler due to the two-stage training process in each time step. In addition, all sequential-in-time methods use significantly more time on Burgers' equation, due to the reduced time step $\Delta t$. While the result of PINN could benefit from a longer training process for the Burgers' equation, we believe it is unlikely as shown in the training dynamics in Fig.~\ref{fig:pinn_training}. In Fig.~\ref{fig:run_time_error}, we plot the global relative $L^2$-error as a function of runtime, with various choices of hyperparameters listed in Appendix~\ref{app:hyperparam}. The figure shows that TENG achieves significantly lower error compared to other methods, even for low runtimes. (The five points with the highest errors for TENG all use the Euler integration scheme, where the dominant error is the Euler discretization error.) We note that all experiments are performed on a single NVIDIA V100 GPU with 32GB memory. In all cases, the 32GB memory is sufficient for our benchmarks.

\begin{table}[ht!]
\centering
\setlength{\tabcolsep}{4pt} % Default value: 6pt
\begin{tabular}{l | c  c  c } 
\hline\hline& \\[-2.4ex]
\multirow{2}{*}{Method} & \multicolumn{3}{c}{Runtime (Hours)} \\ \cline{2-4} & \\[-2.4ex]
& Heat & Allen--Cahn & Burgers'\\ 
\hline & \\[-2.2ex]
TENG-Euler (Ours)      & 2.5              & 2.5              & 12.7 \\ 
TENG-Heun (Ours)       & 4.1              & 4.2              & 20.9\\
TDVP-RK4               & 4.6              & 4.4              & 21.1  \\
OBTI-Adam              & 3.0              & 3.2              & 19.6  \\
OBTI-LBFGS             & 4.4              & 4.1              & 22.1  \\
PINN-ENGD              & 1.1              & --                                & --  \\
PINN-BFGS              & 2.0              & 2.9              & 3.6  \\
\hline
\end{tabular}
\caption{Runtime for various algorithms for the two-dimensional heat equation, Allen--Cahn equation, and Burgers' equation.}
\label{table:runtime}
\end{table}

\begin{figure}[ht!]
    \centering
    \includegraphics[width=0.4\linewidth]{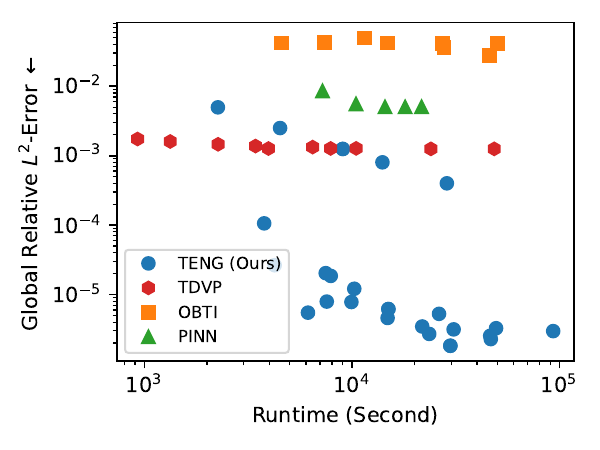}
    \caption{Global relative $L^2$-error as a function of runtime for various algorithms under various hyperparameters.}
    \label{fig:run_time_error}
\end{figure}

\section{Additional Initial Conditions and Benchmarks} \label{app:init_cond}
As mentioned in the main paper, for the three-dimensional heat equation, we consider a initial condition in the form
\begin{equation}
\begin{aligned}
    u_0(x_1, x_2, x_3) &= A_{000} 
      + \sum_{k_1=1}^{2}\sum_{k_2=1}^{2}\sum_{k_3=1}^{2} \left(A_{k_1k_2k_3} \prod_{i=1}^3\cos\left(k_i x_i\right)
      + B_{k_1k_2k_3} \prod_{i=1}^3\sin\left(k_i x_i\right)\right).
\end{aligned}
\end{equation}
Here, we choose the following coefficients: $A_{000} = 0.043$ with the rest of $A_{k_1k_2k_3}$'s and $A_{k_1k_2k_3}$'s shown in Table~\ref{tab:coefs}.

\begin{table}[h!]
\centering
\begin{tabular}{|l|cc|cc|}
\hline
 \multirow{2}{*}{$A_{k_1 k_2 k_3}$} & \multicolumn{2}{c}{$k_1=1$} & \multicolumn{2}{c|}{$k_1=2$} \\
\cline{2-5}
 & $k_2=1$ & $k_2=2$ & $k_2=1$ & $k_2=2$ \\
\hline
$k_3=1$ & 0.047 & -0.021 & 0.034 & -0.02 \\
$k_3=2$ & -0.021 & -0.041 & 0.024 & 0 \\
\hline
\end{tabular}\\
\medskip
\begin{tabular}{|l|cc|cc|}
\hline
 \multirow{2}{*}{$B_{k_1 k_2 k_3}$} & \multicolumn{2}{c}{$k_1=1$} & \multicolumn{2}{c|}{$k_1=2$} \\
\cline{2-5}
 & $k_2=1$ & $k_2=2$ & $k_2=1$ & $k_2=2$ \\
\hline
$k_3=1$ & -0.075 & -0.056 & -0.027 & -0.008 \\
$k_3=2$ & 0.074 & -0.007 & 0.032 & 0 \\
\hline
\end{tabular}
\caption{$A_{k_1 k_2 k_3}$'s and $B_{k_1 k_2 k_3}$'s for the initial condition of three-dimensional heat equation.}
\label{tab:coefs}
\end{table}

In addition, we consider an example of the heat equation defined on a two-dimensional disk with Dirichlet boundary condition. Here, the boundary condition is enforced via an additional loss term in Eq.~\eqref{eq:obti_loss}, and the initial condition is shown below.
\begin{equation} \label{eq:initial_condition}
\begin{aligned}
    u_0(r, \theta) = \frac{1}{4}\bigg(& Z_{01}(r, \theta) - \frac{1}{4}Z_{02}(r, \theta) + \frac{1}{16}Z_{03}(r, \theta) - \frac{1}{64}Z_{04}(r, \theta) \\
    +& Z_{11}(r, \theta) - \frac{1}{2}Z_{12}(r, \theta) + \frac{1}{4}Z_{13}(r, \theta) - \frac{1}{8}Z_{14}(r, \theta) + Z_{21}(r, \theta) + Z_{31}(r, \theta) + Z_{41}(r, \theta) \bigg),
\end{aligned}
\end{equation}
where $r$ and $\theta$ is the polar coordinate variables and $Z_{mn}$ represent the disk harmonics defined as
\begin{equation}
    Z_{mn}(r, \theta) = J_{m}(\lambda_{nm} r) \cos(m \theta)
\end{equation}
with $J_m$ the $m$th Bessel function and $\lambda_{nm}$ the $n$th zero of the $m$th Bessel function. We note that while the analytical solution is solved in the polar coordinates, all neural network based methods solve the equation and benchmark in the original Cartesian coordinates.

For Burgers' equation, we, in addition, consider benchmarks that include a case with smaller $\nu=3/1000$ with the original domain, boundary, and initial conditions, and a case with $\nu=1/100$ but with nonequal domain $\mathcal{X} = [0, 2) \times [0, 2\pi)$ with periodic boundary condition, and $\mathcal{T} = [0, 4]$, and the following initial condition.
\begin{equation} \label{eq:initial_condition}
\begin{aligned}
    u_0(x_1, x_2) = \frac{1}{50}\exp\left(\cos\left(\pi x_1 - 2\right) + \sin\left(x_2 - 1\right) \right) ^ 2.
\end{aligned}
\end{equation}

\begin{figure}[ht!]
    \centering
    \includegraphics[width=\linewidth]{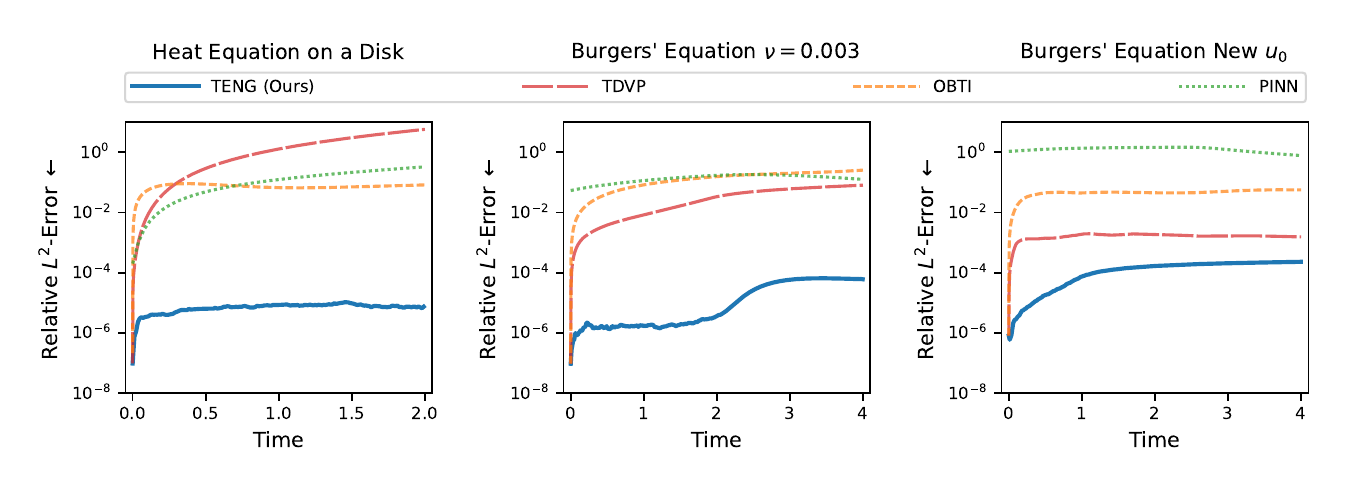}
    \caption{Relative $L^2$-error as a function of time for additional benchmarks. For all sequential-in-time methods, we choose time step size $\Delta t=0.005$ for the heat equation and $\Delta t = 0.001$ for Burgers' equation.}
    \label{fig:benchmark_add}
\end{figure}

In Fig.~\ref{fig:benchmark_add}, we show the additional benchmarks for the aforementioned examples. Here, TENG refers to the TENG-Heun method, OBTI refers to the OBTI-LBFGS method, and PINN refers to the PINN-BFGS method. We note that for the heat equation on a disk with Dirichlet boundary condition, an additional boundary term is included in the loss function defined in Eq.~\eqref{eq:obti_loss} for TENG and OBTI method. (PINN can also incorporate this boundary term analogously.) However, it is unclear how to enforce the boundary condition in TDVP without redesigning the neural network architecture; therefore, we choose to not enforce the boundary condition for TDVP, which could be the reason why TDVP performs particularly badly on the heat equation on a disk.

\section{Details on Obtaining Reference Solutions} \label{app:exact}
{\bf Heat equation.} As mentioned in the main paper, the heat equation permits an analytical solution in terms of Fourier series. For example, we show the two-dimensional case below.
\begin{equation}
    u(x_1,x_2,t) = \sum_{k_1,k_2} \exp \left( -\nu \left(k_1^2 + k_2^2\right) t \right) \tilde{u}_0(k_1, k_2) \exp\left(i k_1 x_1 + i k_2 x_2\right),
\end{equation}
where we omit the terms $2\pi / P$ because in our case $P=2\pi$.
For the two-dimensional case, evaluating the analytical solution is not practical since it is difficult to express our initial condition in Fourier series analytically, 
\begin{equation}
    \tilde{u}_0(k_1, k_2) = \sum_{x_1, x_2} u_0(x_1, x_2)\exp\left(-i k_1 x_1 -i k_2 x_2\right)
\end{equation}
not to mention calculating an infinite sum of frequencies. Therefore, we choose to evaluate the initial condition on a $2048\times 2048=4194304$ grid. Then, we use the discrete Fourier transform to calculate the initial condition in the Fourier space, before truncating the maximum frequency $48$. (The summation contains around $(2 \cdot 48)^2\approx9000$ terms in total). For the three-dimensional case and the case where the domain is a disk, since the initial condition is already defined in terms of sinusoidal functions (or Bessel functions), the solution is analytically calculated.

{\bf Allen--Cahn equation.} Different from the heat equation, Allen--Cahn equation generally does not permit analytical solutions. Therefore, we solve it using the spectral method and consider the solution as a \textit{proxy} for the exact solution as the reference. Here, the basis functions of the spectral method are chosen to be the same Fourier plane waves, so the solution in real space can be written as
\begin{equation}
    u(x_1,x_2,t) = \sum_{k_1,k_2} \tilde{u}(k_1, k_2, t) \exp\left(i k_1 x_1 + i k_2 x_2\right).
\end{equation}
When switching from real space to Fourier space, we have
\begin{equation}
% \begin{aligned}
    \frac{\partial u}{\partial x_j} \rightarrow i k_j \tilde{u} \quad \text{and} \quad uv \rightarrow \tilde{u} \circ \tilde{v},
% \end{aligned}
\end{equation}
where $\circ$ means convolution. Therefore, the PDE can be rewritten in the Fourier space as
\begin{equation} \label{eq:allen_cahn_freq}
    \frac{\partial\tilde{u}}{\partial t} = -\nu(k_1^2 + k_2^2) \tilde{u} + \tilde{u} - \tilde{u}\circ\tilde{u}\circ\tilde{u}.
\end{equation}
Here, we choose a maximum frequency cut-off of 128. (Notice that the maximum number of frequencies encountered is $(3 \cdot 2 \cdot 128)^2\approx600000$ when calculating the double convolution.) The initial condition is calculated analogous to the case of the heat equation, via a discrete Fourier transform on the $2048\times 2048=4194304$ grid. Then, Eq.~\eqref{eq:allen_cahn_freq} is solved using the fourth-order Runge--Kutta integration scheme with a time step $\Delta t=\num{2e-4}$.

{\bf Burgers' equation.} Analogous to Allen--Cahn equation, Burger's equation does not have a general analytical solution either, except in the case of $\nu=0$. Therefore, we use the same spectral method used to solve Allen--Cahn equation. Notice that the term $u \partial u / \partial x_j = \partial u^2 /\partial x_j$. Therefore, Burgers' equation in the Fourier space reads
\begin{equation} \label{eq:burgers_freq}
    \frac{\partial\tilde{u}}{\partial t} = -\nu(k_1^2 + k_2^2) \tilde{u} - \frac{i}{2} (k_1 + k_2) \tilde{u} \circ \tilde{u}.
\end{equation}
Here, we choose a maximum frequency cut-off of 192 (with a maximum of around $(2 \cdot 2 \cdot 192)^2\approx 600000$ terms when calculating the convolution.) The initial condition is calculated in the same way as the heat and Allen--Cahn equation, and Eq.~\eqref{eq:burgers_freq} is solved using the fourth-order Runge--Kutta integration scheme with a time step $\Delta t=\num{1e-4}$.

{\bf Accuracy of the solutions.} In each case, we carefully verify that the number of grid points, the maximum frequency, and the $\Delta t$ are sufficient to obtain a solution that is accurate to near numerical precision, by varying them over multiple values and observing that the solution converges. We note that the case for Burgers' equation with $\nu=0.003$ is challenging for the spectral method and the solution may not converge yet, which means the errors we report could be larger than the actual values.

\section{Details of Neural Network Architectures and Optimization} \label{app:hyperparam}

All the algorithms used in this work are implemented in JAX and use double precision. Our code is posted on GitHub at \url{https://github.com/pde-sim/teng}.

{\bf Neural network architectures.} 
We choose the same architecture for all sequential-in-time methods, which allows a fair comparison. Our neural network architecture is loosely based on Ref.~\cite{berman2023randomized} which consists of multiple feedforward layers with tanh activation function as
\begin{equation}
    \uhat(x) = W_{n_l} \mathrm{tanh}\left(\cdots\mathrm{tanh}\left(W_1 \mathrm{periodic\_embed}(x) + b_1\right)\cdots\right) + b_{n_l},
\end{equation}
where the periodic embedding function is defined as
\begin{equation}
    \mathrm{periodic\_embed}(x) = \mathrm{concatenate}\left(\left[\textstyle\sum_j a_j\cos\left(x_1 + \phi_j\right) + c_j, \textstyle\sum_j a_j\cos\left(x_2 + \phi_j\right) + c_j \right]\right)
\end{equation}
to explicitly enforce the periodic boundary condition in the neural network. Here, all $W$, $b$, $a$, and $c$ are trainable parameters. Here, we choose $n_l=7$ layers and $d_h=40$ hidden dimensions (periodic embedding vector with size 20 for each $x_j$).

For PINN, we adopt the same architecture from Ref.~\cite{muller2023achieving} with the addition of periodic embedding. In addition, we increase the hidden dimension to 64 compared to  Ref.~\cite{muller2023achieving} for better expressivity. 

In the case of the heat equation on a 2D disk, we simply remove the periodic embedding layer.

{\bf Optimization methods.}
For TENG, we randomly sub-sample trainable parameters when solving the least square problems. This can be viewed as a regularization method when the original least square problem is ill-conditioned and can significantly reduce the computational cost. During each time step, we randomly sub-sample 1536 parameters in the first iteration and sub-sample 1024 parameters in the subsequent iterations. In TENG-Euler, the neural network is optimized for 7 iterations in each time step; in TENG-Heun, the neural network is optimized for 7 iterations to obtain $\theta_\mathrm{temp}$, followed by 5 iterations for $\theta_{t+\Delta t}$. We reduce the number of iterations in the second stage because $\theta_\mathrm{temp}$ already gives a good initialization for $\theta_{t+\Delta t}$.

For TDVP, we use the sparse update method proposed by Ref.~\cite{berman2023randomized}, which is also a random sub-sample of parameters for each TDVP step, and has been shown to significantly improve the result compared to a full update of a smaller neural network. Here, we randomly sub-sample 2560 parameters at each time step so that the computational cost of TDVP at each time step roughly matches that of TENG (over the training iterations within each time step).  

For OBTI, we compare our method with both the Adam optimizer and the L-BFGS optimizer. Within each time step, the neural network is optimized for 300 iterations when using the Adam optimizer, and 150 iterations when using the L-BFGS optimizer. The Adam optimizer uses an initial learning rate of $\num{1e-5}$ and an exponential scheduler that decays the learning rate by $1/2$ by the end of the 300 iterations.

For all sequential-in-time methods, we need to train the initial parameters to match the initial conditions. Here we use the same initial parameters for a fair comparison. The initial parameters are trained by first minimizing the loss function
\begin{equation}
    L(\hat{u}_\theta, u_0) = \norm{\hat{u}_\theta - u_0}_{L^2(\mathcal{X})}^2 + \norm{\frac{\partial\hat{u}_\theta}{\partial x_1} - \frac{\partial u_0}{\partial x_1}}_{L^2(\mathcal{X})}^2 + \norm{\frac{\partial\hat{u}_\theta}{\partial x_2} - \frac{\partial u_0}{\partial x_2}}_{L^2(\mathcal{X})}^2
\end{equation}
using natural gradient descent, where we use the least square formulation as mentioned in the main paper and (approximately) solve the least square problem using CGLS method, until the loss value decays below $\num{1e-7}$. Then, we switch the loss function to
\begin{equation}
    L(\hat{u}_\theta, u_0) = \norm{\hat{u}_\theta - u_0}_{L^2(\mathcal{X})}^2
\end{equation}
and use the random sub-sample version of the natural gradient descent, with 1536 parameters updated for each iteration until the loss value decays to near machine precision ($\num{1e-14}$). The $L^2$-norm in both stages are integrated on a 2D grid of 1024 points in each dimension (around 1000000 points in total).

For PINN, both the initial condition and the time evolution are optimized simultaneously; therefore, it does not use the initial parameters mentioned above. In addition, all the time steps of PINN are optimized simultaneously, instead of step by step. For the optimization, we test the BFGS optimizer, and the recently proposed ENGD optimizer~\cite{berman2023randomized}. We note that the ENGD optimizer requires custom implementation for individual PDEs. Since Ref.~\cite{berman2023randomized} did not provide the implementation for Allen--Cahn equation and Burgers' equation, we omit the benchmark of ENGD optimizer for the two equations. We train the neural network for 100000 iterations when using the BFGS optimizer, and 4000 iterations when using the ENGD optimizer. 

For Fig~\ref{fig:run_time_error}, the results include various hyperperameters. For all sequential-in-time methods, we include different time step sizes $\Delta t = 0.0016$, $0.0032$, $0.005$, $0.01$ and $0.02$. For TENG, we include TENG-Euler, TENG-Heun, and TENG-RK4 with different numbers of iterations (within each time step) ranging from $2$ to $20$ and different numbers of randomly subselected parameters for solving least squares (within each iteration) ranging from $384$ to $2048$; for TDVP, we include different numbers of randomly subselected parameters for solving least square projections ranging from $384$ to $2560$ (where we reach the memory limit of V100 GPU); for OBTI, we include both OBTI-Adam and OBTI-LBFGS with different numbers of iterations (within each time step) ranging from $150$ to $300$; and for PINN, we use the BFGS optimizer, and include results of different number of iterations (globally) and different neural network sizes.

%The $\mathtt{\backslash onecolumn}$ command above can be kept in place if you prefer a one-column appendix, or can be removed if you prefer a two-column appendix.  Apart from this possible change, the style (font size, spacing, margins, page numbering, etc.) should be kept the same as the main body.
%%%%%%%%%%%%%%%%%%%%%%%%%%%%%%%%%%%%%%%%%%%%%%%%%%%%%%%%%%%%%%%%%%%%%%%%%%%%%%%
%%%%%%%%%%%%%%%%%%%%%%%%%%%%%%%%%%%%%%%%%%%%%%%%%%%%%%%%%%%%%%%%%%%%%%%%%%%%%%%

\end{document}